\icmltitlerunning{Submission and Formatting Instructions for ICML 2015}
\icmltitlerunning{Generalization error bounds for learning to rank}
\newcommand\reals{\mathbb{R}} 
\newcommand\X{\mathcal{X}} 
\newcommand\Y{\mathcal{Y}} 
\newcommand\Z{\mathcal{Z}} 
\newcommand\E[1]{\mathbb{E}\left[#1\right]} 
\newcommand\Ehat[1]{\widehat{\mathbb{E}}\left[#1\right]} 
\newcommand\Es[2]{\mathbb{E}_{#1}\left[#2\right]} 
\newcommand\inner[1]{\left\langle {#1} \right\rangle} 
\newcommand\F{\mathcal{F}} 
\newcommand\W{\mathcal{W}} 
\newcommand\G{\mathcal{G}} 
\newcommand\N{\mathcal{N}} 
\newcommand\tr{\mathrm{Tr}}
\newcommand\one{\mathbf{1}} 
\DeclareMathOperator*{\argmin}{argmin}
\newcommand\Ffull{\F_{\mathrm{full}}}
\newcommand\Flin{\F_{\mathrm{lin}}}
\newcommand\Fperm{\F_{\mathrm{perminv}}}
\newcommand\radhat[2]{\widehat{\mathfrak{R}}_{#1}\left( #2 \right)}
\newcommand\lipl{|||}
\newcommand\lipr{|||}
\newcommand\liplop{\lipl}
\newcommand\liprop{\lipr_{\mathrm{op}}}
\newcommand\listnet{\phi_{\mathrm{LN}}}
\newcommand\sdcg{\phi_{\mathrm{SD}}}
\newtheorem{lemma}{Lemma}
\newtheorem{theorem}[lemma]{Theorem}
\newtheorem{proposition}[lemma]{Proposition}
\newtheorem{corollary}[lemma]{Corollary}
\begin{document} 

\twocolumn[
\icmltitle{Generalization error bounds for learning to rank: \\ 
           Does the length of document lists matter?}

\icmlauthor{Ambuj Tewari}{tewaria@umich.edu}
\icmladdress{University of Michigan, Ann Arbor}
\icmlauthor{Sougata Chaudhuri}{sougata@umich.edu}
\icmladdress{ University of Michigan, Ann Arbor}

\icmlkeywords{learning to rank, generalization error bounds, learning theory, Rademacher complexity}

\vskip 0.2in
]

\begin{abstract}
We consider the generalization ability of algorithms for learning to rank at a query level, a problem also called subset ranking.
Existing generalization error bounds
necessarily degrade as the size of the document list associated
with a query increases. We show that such a degradation is not intrinsic to the problem.
For several loss functions, including the cross-entropy loss used in the well known
ListNet method, there is \emph{no} degradation in generalization ability
as document lists become longer. We also provide novel generalization error bounds under $\ell_1$ regularization
and faster convergence rates if the loss function is smooth.
\end{abstract}

\parskip 4pt
\section{Introduction}
\label{sec:intro}
Learning to rank at the query level has emerged as an exciting research area at the intersection of information
retrieval and machine learning. Training data in learning to rank consists of queries along with associated documents, where documents are represented as feature vectors.
For each query, the documents are labeled with human relevance judgements. The goal at training time is to learn
a ranking function that can, for a future query, rank its associated documents in order of their relevance to the query.
The performance of ranking functions on test sets is evaluated using a variety of performance measures such as
NDCG~\citep{jarvelin2002}, ERR~\citep{chapelle2009expected} or Average Precision~\citep{yue2007}.

The performance measures used for testing ranking methods cannot be directly optimized during training time as they
lead to discontinuous optimization problems. As a result, researchers often minimize {\em surrogate} loss functions
that are easier to optimize. For example, one might consider smoothed versions of, or convex upper bounds on, the target
performance measure. However, as soon as one optimizes a surrogate loss, one has to deal with two questions~\citep{chapelle2011}. First, does minimizing
the surrogate on finite training data imply small expected surrogate loss on infinite unseen data? Second, does small expected
surrogate loss on infinite unseen data imply small {\em target} loss on infinite unseen data? The first issue is one of {\em
generalization error bounds} for empirical risk minimization (ERM) algorithms that minimize surrogate loss on training data.
The second issue is one of {\em calibration}: does consistency in the surrogate loss imply consistency in the target loss?

This paper deals with the former issue, viz. that of generalization error bounds for surrogate loss minimization.
In pioneering works, \citet{lan2008query, lan2009} gave generalization error bounds for learning to rank algorithms.
However, while the former paper was restricted to analysis of pairwise approach to learning to rank, the later paper was limited to results on just three surrogates: ListMLE, ListNet and RankCosine. To the best of our knowledge,
the most generally applicable bound on the generalization error of query-level learning to rank algorithms has been obtained by \citet{chapelle2010}.

\begin{table*}[t]
\caption{A comparison of three bounds given in this paper for Lipschitz loss functions. Criteria for comparison: algorithm bound applies to (OGD = Online Gradient Descent, [R]ERM = [Regularized] Empirical Risk Minimization),
whether it applies to general (possibly non-convex) losses, and whether the constants involved are tight.
}
\label{tab:comparison}
\begin{center}
\begin{tabular}{cccc}
\hline
Bound & Applies to & Handles Nonconvex Loss & ``Constant" hidden in $O(\cdot)$ notation \\
\hline
Theorem~\ref{thm:online} & OGD & No & Smallest \\
Theorem~\ref{thm:sco} & RERM & No & Small \\
Theorem~\ref{thm:lip} & ERM & Yes & Hides several logarithmic factors \\
\hline
\end{tabular}
\end{center}
\end{table*}

The bound of \citet{chapelle2010}, while generally applicable, does have an explicit dependence on the {\em length} of the document list associated
with a query. Our investigations begin with this simple question: is an explicit dependence on the length of document lists unavoidable in generalization
error bounds for query-level learning to rank algorithms? We focus on the prevalent technique in literature where learning to rank algorithms learn linear scoring functions and obtain ranking by sorting scores in descending order. Our first contribution (Theorem~\ref{thm:dimension}) is to show that dimension of linear scoring functions that are \emph{permutation invariant} (a necessary condition for being valid scoring functions for learning to rank) has no dependence on the length of document lists. Our second contribution (Theorems~\ref{thm:online}, \ref{thm:sco}, \ref{thm:lip}) is to show that as long as one uses the ``right" norm in defining the Lipschitz
constant of the surrogate loss, we can derive generalization error bounds that have \emph{no explicit dependence on the length of document lists}.
The reason that the second contribution involves three bounds is that they all have different strengths and scopes of application (See Table~\ref{tab:comparison} for  a comparison).
Our final contribution is to provide novel generalization error bounds for learning to rank in two previously unexplored settings: almost dimension
independent bounds when using high dimensional features  with $\ell_1$ regularization (Theorem~\ref{thm:lip1}) and ``optimistic" rates (that can be as fast as $O(1/n)$) when the loss function
is smooth (Theorem~\ref{thm:smooth}). We also apply our results on popular convex and non-convex surrogates. All omitted proofs can be found in the appendix (see supplementary material).

\section{Preliminaries}
\label{sec:prelim}
In learning to rank (also called subset ranking to distinguish it from other related
problems, e.g., bipartite ranking), a training example is of the form $((q, d_1,\ldots,d_m),y)$. Here
$q$ is a search query and $d_1,\ldots,d_m$ are $m$ documents with varying degrees of \emph{relevance} to the query.
Human labelers provide the relevance vector $y \in \reals^m$ where the entries in $y$ contain the relevance labels for the
$m$ individual documents. Typically, $y$ has integer-valued entries in the range $\{0,\ldots,Y_{\max}\}$ where $Y_{\max}$ is often less than
$5$. For our theoretical analysis, we get rid of some of these details by assuming that some feature map $\Psi$ exists to map
a query document pair $(q,d)$ to $\reals^d$. As a result, the training example $((q,d_1,\ldots,d_m),y)$ gets converted into $(X,y)$
where $X  = [ \Psi(q,d_1),\ldots,\Psi(q,d_m) ]^\top$ is an $m \times d$ matrix with the $m$ query-document feature vector as rows.
With this abstraction, we have an input space $\X \subseteq \reals^{m \times d}$ and a label space $\Y \subseteq \reals^m$.

A training set consists of iid examples $(X^{(1)},y^{(1)}),\ldots,(X^{(n)},y^{(n)})$ drawn from some underlying distribution $D$. 
To rank the documents in an instance $X \in \X$, often a score vector $s \in \reals^m$ is computed. A ranking of the documents
can then be obtained from $s$ by sorting its entries in decreasing order. A common choice for the scoring function is to make it
\emph{linear} in the input $X$ and consider the following class of vector-valued functions:
\begin{align}\label{eq:lindef}
\Flin &= \{ X \mapsto Xw \::\: X \in \reals^{m \times d}, w \in \reals^d \} .
\end{align}
Depending upon the regularization, we also consider the following two subclasses of $\Flin$ :
\begin{align*}
\F_2 &:= \{ X \mapsto Xw \::\: X \in \reals^{m \times d}, w \in \reals^d, \|w\|_2 \le W_2 \} ,\\
\F_1 &:= \{ X \mapsto Xw \::\: X \in \reals^{m \times d}, w \in \reals^d, \|w\|_1 \le W_1 \} .
\end{align*}
In the input space $\X$, it is natural for the rows of $X$ to have a bound on the appropriate dual norm. Accordingly, whenever
we use $\F_2$, the input space is set to 
$
\X = \{ X \in \reals^{m \times d} \::\: \forall j \in [m],\ \| X_j \|_2 \le R_X  \}
$
where $X_j$ denotes $j$th row of $X$ and $[m] := \{1,\ldots,m\}$.
Similarly, when we use $\F_1$, we set
$
\X = \{ X \in \reals^{m \times d} \::\: \forall j \in [m],\ \| X_j \|_\infty \le \bar{R}_X  \} .
$
These are natural counterparts to the following function classes studied in binary classification and regression:
{\small
\begin{align*}
\G_2 &:= \{ x \mapsto \inner{x,w} \::\:  \|x\|_2 \le R_X, w \in \reals^d, \|w\|_2 \le W_2 \} ,\\
\G_1 &:= \{ x \mapsto \inner{x,w} \::\:  \|x\|_\infty \le \bar{R}_X, w \in \reals^d, \|w\|_1 \le W_1 \} .
\end{align*}
}
A key ingredient in the basic setup of the learning to rank problem is a loss function $\phi : \reals^m \times \Y \to \reals_+$ where
$\reals_+$ denotes the set of non-negative real numbers. Given a class $\F$ of vector-valued functions, a loss $\phi$ yields a
natural loss class: namely the class of real-valued functions that one gets by composing $\phi$ with functions in $\F$:
\[
\phi \circ \F := \{ (X,y) \mapsto \phi(f(X),y) \::\: X \in \reals^{m \times d}, f \in \F \}.
\]
For vector valued scores, the Lipschitz constant of $\phi$ depends on the norm
$\lipl \cdot \lipr$ that we
decide to use in the score space ($\lipl \cdot \lipr_\star$ is dual of $\lipl \cdot \lipr$):
\[
\forall y \in \Y,s,s' \in \reals^m,\ | \phi(s_1,y) - \phi(s_2,y) | \le G_\phi \lipl s_1 - s_2  \lipr .
\]
If $\phi$ is differentiable, this is equivalent to:
$
\forall y \in \Y, s\in\reals^m,\ \lipl \nabla_s \phi(s,y) \lipr_\star \le G_\phi  .
$
Similarly, the smoothness constant $H_\phi$ of $\phi$ defined as:
$\forall y \in \Y,s,s' \in \reals^m,$
\[
\lipl \nabla_s \phi(s_1,y) - \nabla_s \phi(s_2,y) \lipr_\star \le H_\phi \lipl s_1 - s_2 \lipr .
\]
also depends on the norm used in the score space.
If $\phi$ is twice differentiable, the above inequality is equivalent to 
\[
\forall y \in \Y,s \in \reals^m,\ \liplop \nabla_s^2 \phi(s,y) \liprop \le H_\phi
\]
where $\liplop \cdot \liprop$ is the operator norm induced by the pair $\lipl \cdot \lipr,\lipl \cdot \lipr_\star$ and defined as
$
\liplop M \liprop := \sup_{v \neq 0} \frac{\lipl M v \lipr_\star}{\lipl v \lipr} .
$
Define the expected loss of $w$ under the distribution $D$
$L_\phi(w) := \Es{(X,y)\sim D}{\phi(Xw,y)}$
and its empirical loss on the sample as
$\hat{L}_\phi(w) := \frac{1}{n} \sum_{i=1}^n \phi(X^{(i)} w,y^{(i)})$.
The minimizer of $L_\phi(w)$ (resp. $\hat{L}_\phi(w)$) over some function class (parameterized by $w$) will be denoted by $w^\star$ (resp. $\hat{w}$).
We may refer to expectations w.r.t. the sample using $\Ehat{\cdot}$. To reduce notational clutter, we often refer to $(X,y)$ jointly by $Z$
and $\X \times \Y$ by $\Z$. For vectors, $\inner{u,v}$ denotes the standard inner product $\sum_i u_iv_i$ and for matrices $U,V$ of the same shape,
$\inner{U,V}$ means $\tr(U^\top V) = \sum_{ij} U_{ij}V_{ij}$. The set of $m!$ permutation $\pi$ of degree $m$ is denoted by $S_m$. A vector of ones
is denoted by $\one$.

\section{Application to Specific Losses}
\label{sec:applications}
To whet the reader's appetite for the technical presentation that follows, we will consider two loss functions, one convex and one non-convex, to illustrate the concrete improvements
offered by our new generalization bounds. A generalization bound is of the form:
$L_\phi(\hat{w}) \le L_\phi(w^\star) +$``complexity term". 
It should be noted that $w^\star$ is not available to the learning algorithm as it 
needs knowledge of underlying distribution of the data. The complexity term of \citet{chapelle2010} is $O(G_\phi^{CW} W_2 R_X \sqrt{m/n})$. The constant $G_\phi^{CW}$ is the Lipschitz constant of the surrogate $\phi$ (viewed as a function of the score vector $s$) w.r.t. $\ell_2$ norm. Our bounds will instead be of the form
$O(G_\phi W_2 R_X \sqrt{1/n})$, where $G_\phi$ is the Lipschitz constant of $\phi$ w.r.t. $\ell_\infty$ norm. Note that our bounds
are free of any explicit $m$ dependence. Also, by definition, $G_\phi \le G_\phi^{CW}\sqrt{m}$ but the
former can be much smaller as the two examples below illustrate. In benchmark datasets~\citep{liu2007letor}, $m$ can easily be in the $100$-$1000$ range.

\subsection{Application to ListNet}
\label{sec:listnet}

The ListNet ranking method~\citep{Cao2007} uses a \emph{convex} surrogate, that is defined in the following way\footnote{The ListNet paper actually defines a family of losses based on probability models for top $k$ documents. We use $k=1$ in our definition since that is the version implemented in their experimental results.}. Define $m$ maps from $\reals^m$ to $\reals$ as: $P_j(v) = \exp(v_j)/\sum_{i=1}^m \exp(v_i)$ for $j \in [m]$. Then, we
have, for $s \in \mathbb{R}^m$ and $y \in \mathbb{R}^m$,
\[
\listnet(s,y) = - \sum_{j=1}^m P_j(y) \log P_j(s) .
\]
An easy calculation shows that the Lipschitz (as well as smoothness) constant of $\listnet$ is $m$ independent.
\begin{proposition}
\label{prop:listnet}
The Lipschitz (resp. smoothness) constant of $\listnet$ w.r.t. $\|\cdot\|_\infty$ satisfies $G_{\listnet} \le 2$ (resp. $H_{\listnet} \le 2$) for any $m \ge 1$.
\end{proposition}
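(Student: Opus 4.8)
The plan is to exploit the exponential-family structure of $\listnet$. First I would simplify the loss. Since $\sum_{j=1}^m P_j(y) = 1$ by definition of the softmax maps $P_j$, and $\log P_j(s) = s_j - \log \sum_{i=1}^m \exp(s_i)$, the loss collapses to
$\listnet(s,y) = \log\bigl(\sum_{i=1}^m \exp(s_i)\bigr) - \inner{P(y), s}$,
i.e.\ a log-sum-exp term minus a linear term, where $P(y) = (P_1(y),\ldots,P_m(y))$ is a fixed probability vector. This rewriting makes the derivatives transparent and is the source of both bounds.

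Differentiating, the linear term contributes $-P(y)$ and the log-sum-exp term contributes the softmax, so $\nabla_s \listnet(s,y) = P(s) - P(y)$ where $P(s)=(P_1(s),\ldots,P_m(s))$. For the Lipschitz bound, recall from the preliminaries that Lipschitzness with respect to $\|\cdot\|_\infty$ is equivalent to bounding the gradient in the dual norm $\|\cdot\|_1$. Since $P(s)$ and $P(y)$ are both probability vectors, each has unit $\ell_1$ norm, and the triangle inequality gives $\|P(s) - P(y)\|_1 \le \|P(s)\|_1 + \|P(y)\|_1 = 2$, with no dependence on $m$. This yields $G_{\listnet} \le 2$.

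For the smoothness bound I would compute the Hessian, which is just the Jacobian of the softmax: $\nabla_s^2 \listnet(s,y) = \mathrm{diag}(P(s)) - P(s)P(s)^\top$ (the term $-\inner{P(y),s}$ is linear in $s$ and so has vanishing Hessian). By the twice-differentiable characterization, $H_{\listnet}$ is the operator norm of this Hessian induced by the pair $(\|\cdot\|_\infty, \|\cdot\|_1)$, namely $\sup_{\|v\|_\infty \le 1}\|Mv\|_1$ with $M = \mathrm{diag}(p) - pp^\top$ and $p = P(s)$. The main obstacle is that this $\ell_\infty \to \ell_1$ operator norm is non-standard, so a generic bound is unhelpful; the trick is to use the explicit structure of $M$. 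Writing out the $k$-th coordinate gives $(Mv)_k = p_k\bigl(v_k - \inner{p,v}\bigr)$, hence $\|Mv\|_1 = \sum_k p_k\,|v_k - \inner{p,v}|$.

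Finally I would bound this sum directly. For $\|v\|_\infty \le 1$ we have $|v_k| \le 1$ and $|\inner{p,v}| \le \|p\|_1\|v\|_\infty \le 1$, so $|v_k - \inner{p,v}| \le 2$ for every $k$; since $\sum_k p_k = 1$ this gives $\|Mv\|_1 \le 2$ and therefore $H_{\listnet} \le 2$ for all $m \ge 1$. (A sharper argument observing that $M$ is positive semidefinite and $v^\top M v = \mathrm{Var}_p(v) \le 1$ for $\|v\|_\infty \le 1$ would even yield the constant $1$, but the crude triangle-inequality bound already suffices for the stated claim.)
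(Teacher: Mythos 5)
Your proposal is correct and takes essentially the same approach as the paper's proof: both compute $\nabla_s \listnet(s,y) = P(s) - P(y)$ and bound its $\ell_1$ norm by $2$ via the triangle inequality, and both compute the softmax Hessian $\mathrm{diag}(P(s)) - P(s)P(s)^\top$ and bound its $\|\cdot\|_{\infty \to 1}$ operator norm by $2$. The only minor difference is in that last step --- the paper bounds the operator norm by the sum of the absolute values of all Hessian entries, whereas you bound $\|Mv\|_1$ directly through the factorization $(Mv)_k = p_k\left(v_k - \inner{p,v}\right)$ --- and your parenthetical variance observation is correct and even shows the smoothness constant is at most $1$.
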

Since the bounds above are independent of $m$, so the generalization bounds resulting from their use in Theorem~\ref{thm:lip} and Theorem~\ref{thm:smooth}
will also be independent of $m$ (up to logarithmic factors).
We are not aware of prior generalization bounds for ListNet that do not scale with $m$. In particular, the results
of \citet{lan2009}
have an $m!$ dependence since they consider the top-$m$ version of ListNet. However, even if the top-$1$ variant above is considered, their proof technique will result in at least a
linear dependence on $m$ and does not result in as tight a bound as we get from our general 
results. It is also easy to see that the Lipschitz constant  $G_{\listnet}^{CW}$  of ListNet loss w.r.t. $\ell_2$ norm is also $2$ and hence the bound of~\citet{chapelle2010} necessarily has a $\sqrt{m}$ dependence in it.
Moreover, generalization error bounds for ListNet exploiting its smoothness will interpolate between the 
pessimistic $1/\sqrt{n}$ and optimistic $1/n$ rates. These have never been provided before. 

\subsection{Application to Smoothed DCG@1}
\label{sec:sdcg1}

This example is from the work of~\citet{chapelle2010}. Smoothed DCG@1, a \emph{non-convex} surrogate, is defined as:
\[
\sdcg(s,y) = D(1) \sum_{i=1}^m G(y_i) \frac{\exp(s_i/\sigma)}{\sum_j \exp(s_j/\sigma)} ,
\]
where $D(i) = 1/\log_2(1+i)$ is the ``discount" function and $G(i) = 2^i-1$ is the ``gain" function. The amount of smoothing is controlled by the parameter $\sigma > 0$
and the smoothed version approaches DCG@1 as $\sigma \to 0$ (DCG stands for Discounted Cumulative Gain \cite{jarvelin2002}).

\begin{proposition}
\label{prop:sdcg1}
The Lipschitz  constant of $\sdcg$ w.r.t. $\|\cdot\|_\infty$ satisfies $G_{\sdcg} \le 2D(1)G(Y_{\max})/\sigma$ for any $m \ge 1$. Here $Y_{\max}$ is maximum possible relevance score
of a document (usually less than 5).
\end{proposition}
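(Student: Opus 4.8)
The plan is to invoke the differentiable characterization of the Lipschitz constant from the preliminaries. Since we measure Lipschitzness of $\sdcg$ in the $\|\cdot\|_\infty$ norm on the score space, whose dual is $\|\cdot\|_1$, it suffices to show that $\|\nabla_s \sdcg(s,y)\|_1 \le 2D(1)G(Y_{\max})/\sigma$ for every $s \in \reals^m$ and every admissible label $y$ (entries in $\{0,\ldots,Y_{\max}\}$). So the whole statement reduces to a gradient computation followed by an $\ell_1$ bound, and the key feature to track is that the final bound comes out uniform in $m$.

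First I would introduce the temperature-$\sigma$ softmax $p_i(s) := \exp(s_i/\sigma)/\sum_j \exp(s_j/\sigma)$, so that $\sdcg(s,y) = D(1)\sum_{i=1}^m G(y_i)\, p_i(s)$. The central computation is the Jacobian of the softmax, $\partial p_i/\partial s_k = \tfrac{1}{\sigma}\, p_i(\delta_{ik} - p_k)$. Substituting this into the chain rule and collecting terms yields the clean expression $\partial \sdcg/\partial s_k = \tfrac{D(1)}{\sigma}\, p_k\,(G(y_k) - \bar G)$, where $\bar G := \sum_i G(y_i)\, p_i(s)$ is the softmax-weighted average gain.

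Next I would bound the $\ell_1$ norm. Because $p_k \ge 0$ and $\sum_k p_k = 1$, we get $\|\nabla_s \sdcg\|_1 = \tfrac{D(1)}{\sigma}\sum_k p_k\,|G(y_k) - \bar G|$, i.e. $D(1)/\sigma$ times the mean absolute deviation of the gains under the distribution $p$. Using the triangle inequality $|G(y_k)-\bar G| \le G(y_k) + \bar G$ (both terms are nonnegative) together with $\sum_k p_k G(y_k) = \bar G$, this sum is at most $2\bar G$. Since $G(i)=2^i-1$ is nondecreasing and $y_i \in \{0,\ldots,Y_{\max}\}$, each $G(y_i)\in[0,G(Y_{\max})]$, hence $\bar G \le G(Y_{\max})$, which gives the claimed bound $2D(1)G(Y_{\max})/\sigma$ with no $m$ dependence.

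I expect no serious obstacle: the only delicate point is the softmax-Jacobian bookkeeping, and the recognition that the triangle-inequality step is what produces the factor of $2$ in the statement. In fact the sharper estimate $|G(y_k)-\bar G| \le G(Y_{\max})$ (valid since both quantities lie in $[0,G(Y_{\max})]$) would already yield $D(1)G(Y_{\max})/\sigma$, so the stated constant is loose by a factor of $2$; I would use whichever is cleaner, since any constant of this form suffices when plugged into Theorem~\ref{thm:lip}.
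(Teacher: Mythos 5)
Your proof is correct and follows essentially the same route as the paper's: compute the gradient of $\sdcg$, bound its $\ell_1$ norm (the dual of $\|\cdot\|_\infty$) by the triangle inequality together with the facts that the softmax weights are nonnegative and sum to one, and bound each gain by $G(Y_{\max})$. Your closing remark is also valid: collecting the gradient as $[\nabla_s \sdcg(s,y)]_k = \tfrac{D(1)}{\sigma}\,p_k\,(G(y_k)-\bar G)$ with $G(y_k),\bar G \in [0,G(Y_{\max})]$ shows the stated constant is loose by a factor of $2$, a sharpening that the paper's expanded-form bound does not capture.
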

As in the ListNet loss case we previously considered, the generalization bound resulting from Theorem~\ref{thm:lip} will be independent of $m$. This is intuitively satisfying: DCG@1, whose smoothing we are considering, only
depends on the document that is put in the top position by the score vector $s$ (and not on the entire sorted order of $s$). Our generalization bound does not deteriorate as the total list size $m$ grows. In contrast, the bound of~\citet{chapelle2010} will necessarily deteriorate as $\sqrt{m}$ since the constant $G_{\sdcg}^{CW}$ is the same as $G_{\sdcg}$. Moreover, it should be noted that even in the original SmoothedDCG paper, $\sigma$ is present in the denominator of $G_{\sdcg}^{CW}$, so our results are directly comparable. Also note that this example can easily be extended to consider DCG@$k$ for case when document list length $m \gg k$ (a very common scenario in practice).

\subsection{Application to RankSVM}
\label{sec:ranksvm}
RankSVM~\citep{joachims2002} is another well established ranking method, which minimizes a \emph{convex} surrogate based on pairwise comparisons of documents. A number of studies have shown that ListNet has better empirical performance than RankSVM. One possible reason for the better performance of ListNet over RankSVM is that the Lipschitz  constant of RankSVM surrogate w.r.t $\|\cdot\|_{\infty}$ doe scale with document list size as $O(m^2)$. Due to lack of space, we give the details in the supplement. 

\section{Does The Length of Document Lists Matter?}
\label{sec:length}

Our work is directly motivated by a very interesting generalization bound for learning to rank due to \citet[Theorem 1]{chapelle2010}. They considered a Lipschitz continuous loss $\phi$ with Lipschitz constant $G_\phi^{CW}$ w.r.t. \emph{the $\ell_2$ norm}. They show that, with probability at least $1-\delta$,
\begin{multline*}
\forall w \in \F_2,\ L_\phi(w) \le \hat{L}_\phi(w) + 3 \, G_\phi^{CW} W_2 R_X \sqrt{\frac{m}{n}} \\
+ \sqrt{\frac{8 \log(1/\delta)}{n}} .
\end{multline*}
The dominant term on the right is $O(G_\phi^{CW}W_2 R_X \sqrt{m/n})$. In the next three sections, we will derive improved
bounds of the form $\tilde{O}(G_\phi W_2 R_X \sqrt{1/n})$ where $G_\phi \le G_\phi^{CW} \sqrt{m}$ but can be much smaller.
Before we do that, let us examine the dimensionality reduction in linear scoring function that is caused by a natural permutation invariance requirement.

\subsection{Permutation invariance removes $m$ dependence in dimensionality of linear scoring functions}
As stated in Section~\ref{sec:prelim}, a ranking is obtained by sorting a score vector obtained via a linear scoring function $f$. Consider the space of \emph{linear} scoring function that consists of \emph{all} linear maps $f$ that map $\reals^{m \times d}$ to $\reals^m$:
{\small
\[
\Ffull := \left\{ X \mapsto [\inner{X,W_1}, \ldots, \inner{X,W_m}]^{\top} \::\: W_i \in \reals^{m \times d} \right\} .
\]
}
These linear maps are fully parameterized by matrices $W_1,\ldots,W_m$. Thus, a full parameterization of the linear scoring function is of dimension $m^2 d$.
Note that the popularly used class of linear scoring functions $\Flin$ defined in Eq.~\ref{eq:lindef} is actually a low $d$-dimensional subspace of the full $m^2d$ dimensional space of all linear maps. It is important to note that
the dimension of $\Flin$ is \emph{independent of $m$}.

In learning theory, one of the factors influencing the generalization error bound is the richness of the class of hypothesis functions. Since the linear function class $\Flin$ has dimension independent of $m$, we intuitively expect that, at least
under some conditions, algorithms that minimize ranking losses using linear scoring functions should have an $m$ independent complexity term in the generalization bound.
The reader might wonder whether the dimension reduction from $m^2d$ to $d$ in going from $\Ffull$ to $\Flin$ is arbitrary. To dispel this doubt, we prove the lower dimensional class $\Flin$ is the \emph{only sensible choice} of linear scoring functions
in the learning to rank setting. This is because scoring functions should satisfy a permutation invariance property. That is, if we apply a permutation $\pi \in S_m$ to the rows of $X$ to get a matrix $\pi X$ then the scores should also simply get permuted
by $\pi$. That is, we should only consider scoring functions in the following class:
\[
\Fperm = \{ f : \forall \pi \in S_m, \forall X \in \reals^{m \times d}, \pi f(X) = f(\pi X) \}.
\]
The permutation invariance requirement, in turn, forces a reduction from dimension $m^2d$ to just $2d$ (which has no dependence on $m$).

\begin{theorem}
\label{thm:dimension}
The intersection of the function classes $\Ffull$ and $\Fperm$ is the $2d$-dimensional class:
\begin{equation}\label{eq:reduced}
\Flin' = \{ X \mapsto Xw + (\one^\top X v) \one \::\: w,v \in \reals^d \} .
\end{equation}
\end{theorem}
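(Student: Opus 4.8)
The plan is to prove the set equality by establishing the two inclusions separately, with the bulk of the work in the reverse direction. The inclusion $\Flin' \subseteq \Ffull \cap \Fperm$ is a direct verification. Every map $X \mapsto Xw + (\one^\top X v)\one$ is linear in $X$, so it lies in $\Ffull$ (one can write down the parameter matrices $W_i$ explicitly), and it is permutation equivariant because the all-ones vector is fixed by every permutation: since $\pi\one = \one$ and $\one^\top \pi = \one^\top$, permuting the rows of $X$ permutes $Xw$ accordingly while leaving the scalar $\one^\top X v$ untouched. This direction needs nothing beyond bookkeeping.

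The substance is the reverse inclusion $\Ffull \cap \Fperm \subseteq \Flin'$. I would start from an arbitrary $f \in \Ffull$, written through its parameter matrices $W_1,\ldots,W_m \in \reals^{m\times d}$ so that $f(X)_i = \inner{X,W_i}$, and denote by $a_{i,j}\in\reals^d$ the $j$-th row of $W_i$, giving $f(X)_i = \sum_j \inner{X_j, a_{i,j}}$. The next step is to translate the equivariance constraint $\pi f(X) = f(\pi X)$, required for all $X$ and all $\pi\in S_m$, into a constraint on the $a_{i,j}$. Matching the two sides coefficient-by-coefficient in $X$ (legitimate because the identity holds for all $X$) collapses, after relabeling the permutation and its induced action on the rows of $X$ and of $W_i$, to the clean symmetry condition
\[
a_{\sigma(i),\sigma(j)} = a_{i,j} \quad \text{for all } i,j\in[m],\ \sigma\in S_m .
\]
Carrying out this reduction correctly---keeping straight which permutation acts on the output coordinate versus on the rows---is the step demanding the most care, and is where I expect the main bookkeeping obstacle to lie.

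Once the symmetry condition is in hand, the argument becomes combinatorial. The condition says that the doubly-indexed family $(a_{i,j})$ is invariant under the diagonal action of $S_m$ on index pairs, so $a_{i,j}$ can depend only on the orbit of $(i,j)$. For $m \ge 2$ this action has exactly two orbits, the diagonal $\{(i,i)\}$ and the off-diagonal $\{(i,j):i\neq j\}$, because a single permutation can send any ordered pair of equal (resp. distinct) indices to any other such pair. Hence there exist fixed vectors $v,u\in\reals^d$ with $a_{i,i}=v$ for all $i$ and $a_{i,j}=u$ for all $i\neq j$.

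Reassembling $f$ from these values finishes the proof. Using $\sum_j \inner{X_j,u} = \one^\top X u$, which is independent of $i$, I get
\[
f(X)_i = \inner{X_i,v} + \sum_{j\neq i}\inner{X_j,u} = \inner{X_i,\,v-u} + \one^\top X u ,
\]
which is exactly the $i$-th coordinate of $X(v-u) + (\one^\top X u)\one$, a member of $\Flin'$. To confirm the stated dimension $2d$ (for $m\ge 2$), I would verify that the parameterization $(w,u)\mapsto \bigl(X\mapsto Xw+(\one^\top Xu)\one\bigr)$ is injective: if the resulting map is identically zero, subtracting two distinct output coordinates forces $w=0$, after which the surviving term forces $u=0$.
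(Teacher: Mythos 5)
Your proof is correct, and it reaches the same structural conclusion as the paper --- that permutation invariance forces each parameter matrix $W_i$ to have one vector in row $i$ and a second, common vector in all other rows --- but by a cleaner mechanism. The paper works with specific families of permutations: first the stabilizer $\{\pi : \pi(i)=i\}$, from which the trace identity $\tr(W_i^\top X) = \tr(W_i^\top P X)$ for all $X$ yields $PW_i = W_i$ and hence equality of the off-$i$ rows, and then the transpositions $1 \leftrightarrow i$ to tie the different $W_i$ together. You instead extract the full equivariance constraint at once, matching coefficients of the rows $X_j$ to get $a_{\sigma(i),\sigma(j)} = a_{i,j}$ for all $\sigma \in S_m$, and then invoke the fact that the diagonal action of $S_m$ on $[m]^2$ has exactly two orbits (diagonal and off-diagonal) when $m \ge 2$. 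Your symmetry condition is indeed what coefficient matching gives (with the inverse-permutation bookkeeping you flag: one finds $a_{\pi^{-1}(i),j} = a_{i,\pi(j)}$, which relabels to the stated condition), so the step you identify as delicate does go through. The orbit argument buys conceptual clarity and immediately generalizes (e.g., it makes transparent what would happen for actions with more orbits), while the paper's approach is more elementary, using only linear-algebraic identities. You also verify the injectivity of the parameterization $(w,v) \mapsto Xw + (\one^\top X v)\one$ for $m \ge 2$, substantiating the ``$2d$-dimensional'' claim, which the paper asserts without proof; this is a genuine, if small, addition.
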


Note that the extra degree of freedom provided by the $v$ parameter in Eq.~\ref{eq:reduced} is useless for ranking purposes since adding a constant vector (i.e., a multiple of $\one$) to a score vector has no effect on the sorted order.
This is why we said that $\Flin$ is the only sensible choice of linear scoring functions.

\section{Online to Batch Conversion}
\label{sec:online}

In this section, we build some intuition as to why it is natural to use $\|\cdot\|_\infty$ in defining the Lipschitz constant of the loss $\phi$.
To this end, consider the following well known online gradient descent (OGD) regret guarantee. Recall that OGD refers to the simple online 
algorithm that makes the update $w_{i+1} \gets w_i - \eta \nabla_{w_i} f_i(w_i)$ at time $i$. If we run OGD to generate $w_i$'s, we have,
for all $\|w\|_2 \le W_2$: 
\[
\sum_{i=1}^n f_i(w_i) - \sum_{i=1}^n f_i(w) \le \frac{W_2^2}{2\eta} + \eta G^2 n
\]
where $G$ is a bound on the maximum $\ell_2$-norm of the gradients $\nabla_{w_i} f_i(w_i)$ and $f_i$'s have to be \emph{convex}.
If $(X^{(1)},y^{(1)}),\ldots,
(X^{(n)},y^{(n)})$ are iid then by setting $f_i(w) = \phi(X^{(i)}w,y^{(i)})$, $1\le i \le n$ we can do an ``online to batch conversion". That is, we optimize over $\eta$, take expectations and use Jensen's inequality to get the following excess risk bound:
\[
\forall \|w\|_2 \le W_2,\ \E{L_\phi(\hat{w}_{\mathrm{OGD}})} - L_\phi(w) \le W_2 G \sqrt{\frac{2}{n}}  
\]
where $\hat{w}_{\mathrm{OGD}} = \tfrac{1}{n} \sum_{i=1}^n w_i$ and $G$ has to satisfy (noting that $s= X^{(i)} w_i$)
\[
G \ge \| \nabla_{w_i} f_i(w_i) \|_2 = \| (X^{(i)})^\top \nabla_s \phi(X^{(i)}w_i, y^{(i)}) \|_2
\]
where we use the chain rule to express $\nabla_w$ in terms of $\nabla_s$.
Finally, we can upper bound 
\begin{align*}
&\quad \| (X^{(i)})^\top \nabla_s \phi(X^{(i)}w_i, y^{(i)}) \|_2 \ \\
& \le \| (X^{(i)})^\top \|_{1 \to 2} \cdot \| \nabla_s \phi(X^{(i)}w_i, y^{(i)}) \|_1 \\
& \le R_X  \| \nabla_s \phi(X^{(i)}w_i, y^{(i)}) \|_1 
\end{align*}
as $R_X \ge \max_{j=1}^m \| X_j \|_2$ and because of the following lemma.

\begin{lemma}
\label{lem:normexpr}
For any $1 \leq p \leq \infty$,
%
\begin{equation*}
\begin{split}
&\|X\|_{p \to q}= \underset{v \neq 0}{\sup} \frac{\|Xv\|_q}{\|v\|_p} \\
& \| X^\top \|_{1 \to p} = \| X \|_{q \to \infty} = \max_{j=1}^m \| X_j \|_p \ ,
\end{split}
\end{equation*}
where $q$ is the dual exponent of $p$ (i.e., $\tfrac{1}{q}+\tfrac{1}{p} = 1$).
\end{lemma}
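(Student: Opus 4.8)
The first displayed line is simply the definition of the induced $(p\to q)$ operator norm, so there is nothing to prove there; the content of the lemma is the chain of equalities in the second line. The plan is to establish the two identities $\|X\|_{q\to\infty} = \max_j \|X_j\|_p$ and $\|X^\top\|_{1\to p} = \max_j \|X_j\|_p$ separately, each by a matching pair of upper and lower bounds, where throughout $p$ and $q$ are dual exponents and $X_j \in \reals^d$ is the $j$th row of the $m \times d$ matrix $X$.

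For $\|X\|_{q\to\infty}$, I would start from the observation that the $j$th coordinate of $Xv$ is exactly $\inner{X_j,v}$, so that $\|Xv\|_\infty = \max_j |\inner{X_j,v}|$. The upper bound $\|X\|_{q\to\infty} \le \max_j \|X_j\|_p$ then follows immediately from H\"older's inequality $|\inner{X_j,v}| \le \|X_j\|_p \|v\|_q$ applied uniformly over $j$. For the matching lower bound I would let $j^\star$ attain $\max_j \|X_j\|_p$ and choose a vector $v$ saturating H\"older's inequality against the row $X_{j^\star}$, so that $\|Xv\|_\infty \ge |\inner{X_{j^\star},v}| = \|X_{j^\star}\|_p\,\|v\|_q$, yielding the reverse inequality. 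This last step is the only delicate point in the whole lemma: one must exhibit a vector $v$ achieving equality in H\"older's inequality for an arbitrary exponent $p\in[1,\infty]$, including the endpoint cases $p=1$ and $p=\infty$; this is standard but is where I would be careful, since the saturating vector is defined differently in those cases.

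For $\|X^\top\|_{1\to p}$, the key is to write $X^\top u = \sum_{j=1}^m u_j X_j$, since the columns of $X^\top$ are precisely the rows of $X$. The upper bound is then the triangle inequality followed by factoring out the largest row norm, $\|X^\top u\|_p \le \sum_j |u_j|\,\|X_j\|_p \le (\max_j\|X_j\|_p)\,\|u\|_1$, while the lower bound is obtained by testing against the standard basis vector $u = e_{j^\star}$, for which $X^\top e_{j^\star} = X_{j^\star}$ and $\|e_{j^\star}\|_1 = 1$, so the ratio equals $\max_j \|X_j\|_p$. Alternatively, the equality of the two operator norms can be read off from the standard duality $\|A\|_{a\to b} = \|A^\top\|_{b'\to a'}$ (primes denoting dual exponents), which with $A = X^\top$ and $(a,b)=(1,p)$ specializes to $\|X^\top\|_{1\to p} = \|X\|_{q\to\infty}$, reducing the two computations to one; I nonetheless find the two direct arguments cleaner to present.
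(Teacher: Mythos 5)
Your proposal is correct, and it is organized differently from the paper's proof. The paper chains the identities through duality: it first shows $\|X^\top\|_{1\to p} = \|X\|_{q\to\infty}$ by writing both operator norms as double suprema via dual-norm representations ($\|z\|_p = \sup_{u\neq 0}\inner{z,u}/\|u\|_q$ and $\|z\|_\infty = \sup_{v\neq 0}\inner{v,z}/\|v\|_1$) and exchanging the order of the suprema; it then identifies $\|X\|_{q\to\infty}$ with $\max_{j}\|X_j\|_p$ by expanding $\|Xu\|_\infty = \max_j|\inner{X_j,u}|$, swapping the sup and the max, and invoking the rowwise dual-norm identity. You instead make $\max_j\|X_j\|_p$ the hub and prove each operator norm equal to it separately. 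For the $q\to\infty$ part your argument is the same in substance as the paper's: the dual-norm identity the paper cites is precisely H\"older plus its saturation, which you unpack by hand at the cost of the endpoint case analysis at $p\in\{1,\infty\}$ that you rightly flag (the paper's packaging absorbs those cases silently by treating the identity as a black box). For the $1\to p$ part, however, your column-decomposition argument ($X^\top u = \sum_j u_j X_j$, triangle inequality, then testing on standard basis vectors) is genuinely more elementary than the paper's sup-swapping step: it uses no duality at all, and it delivers the transpose identity $\|X^\top\|_{1\to p} = \|X\|_{q\to\infty}$ as a free corollary of both sides equaling $\max_j\|X_j\|_p$. The trade-off is symmetric: the paper's route is shorter and uniform in $p$ but leans twice on dual-norm machinery, while yours localizes all the analytic content (H\"older saturation) in one step and replaces the other with pure linear algebra.
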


Thus, we have shown the following result.

\begin{theorem}
\label{thm:online}
Let $\phi$ be convex and have Lipschitz constant $G_\phi$ w.r.t. $\|\cdot\|_\infty$. Suppose we run online gradient descent (with appropriate step size $\eta$)
on $f_i(w) = \phi(X^{(i)}w,y^{(i)})$ and return $\hat{w}_{\mathrm{OGD}} = \tfrac{1}{T} \sum_{i=1}^n w_i$. Then we have,
\[
\forall \|w\|_2 \le W_2,\ \E{L_\phi(\hat{w}_{\mathrm{OGD}})} - L_\phi(w) \le G_\phi\,W_2\,R_X \sqrt{\frac{2}{n}} .
\]
\end{theorem}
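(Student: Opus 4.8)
The plan is to obtain the bound through a standard online-to-batch conversion, assembling the ingredients already collected in this section. The starting point is the OGD regret guarantee quoted above. Since $\phi$ is convex and $X^{(i)}w$ is linear in $w$, each $f_i(w) = \phi(X^{(i)}w,y^{(i)})$ is convex, so the guarantee applies and gives $\sum_{i=1}^n f_i(w_i) - \sum_{i=1}^n f_i(w) \le \frac{W_2^2}{2\eta} + \eta G^2 n$ for every comparator $\|w\|_2 \le W_2$, where $G$ is any uniform bound on $\|\nabla_{w_i} f_i(w_i)\|_2$.

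Next I would optimize the right-hand side over the step size; the balancing choice $\eta = W_2/(G\sqrt{2n})$ collapses the regret to $W_2 G \sqrt{2n}$. To turn this regret into an excess-risk statement I take expectations over the sample. The key probabilistic observation is that $w_i$ is measurable with respect to the first $i-1$ examples while $(X^{(i)},y^{(i)})$ is drawn independently, so $\E{f_i(w_i)} = \E{L_\phi(w_i)}$, and for any fixed comparator $\E{f_i(w)} = L_\phi(w)$. Dividing the regret by $n$, taking expectations, and invoking Jensen's inequality against the average iterate $\hat{w}_{\mathrm{OGD}} = \tfrac{1}{n}\sum_i w_i$ — valid because $L_\phi$ is convex, being an expectation of convex maps — yields $\E{L_\phi(\hat{w}_{\mathrm{OGD}})} - L_\phi(w) \le W_2 G \sqrt{2/n}$.

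It remains to supply a bound $G$ free of any explicit $m$ dependence, and this is exactly where the choice of the $\|\cdot\|_\infty$ norm in the definition of $G_\phi$ pays off. By the chain rule, $\nabla_{w_i} f_i(w_i) = (X^{(i)})^\top \nabla_s \phi(X^{(i)}w_i, y^{(i)})$. I would bound its $\ell_2$ norm by the operator-norm inequality $\|(X^{(i)})^\top \nabla_s\phi\|_2 \le \|(X^{(i)})^\top\|_{1\to 2}\,\|\nabla_s\phi\|_1$. Lemma~\ref{lem:normexpr} (with $p=2$) identifies $\|(X^{(i)})^\top\|_{1\to 2} = \max_{j} \|X^{(i)}_j\|_2 \le R_X$, while the differentiable characterization of Lipschitzness gives $\|\nabla_s\phi\|_1 \le G_\phi$, since $\ell_1$ is dual to $\ell_\infty$. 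Hence $G \le G_\phi R_X$ uniformly, and substituting this into the excess-risk bound gives exactly $G_\phi W_2 R_X \sqrt{2/n}$.

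The calculations themselves are routine; the conceptual crux — and the reason no $\sqrt{m}$ factor appears — lies in the pairing in the final paragraph. Measuring the Lipschitz constant in $\ell_\infty$ forces $\nabla_s\phi$ to be controlled in the dual $\ell_1$ norm, which is precisely the norm that couples with the $1\!\to\!2$ operator norm of $(X^{(i)})^\top$; and that operator norm equals $\max_j \|X^{(i)}_j\|_2 \le R_X$, a quantity with no reference to the number of documents $m$. Using instead the $\ell_2$ Lipschitz constant would measure the gradient in $\ell_2$ and make the relevant matrix norm accrue an extra $\sqrt{m}$. I expect the only step requiring genuine care to be the independence argument behind $\E{f_i(w_i)} = \E{L_\phi(w_i)}$, which must be stated cleanly to justify the online-to-batch conversion.
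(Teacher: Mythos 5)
Your proposal is correct and follows essentially the same route as the paper: the OGD regret bound with optimized step size, online-to-batch conversion via expectations and Jensen's inequality, and the key bound $G \le G_\phi R_X$ obtained from the chain rule, the inequality $\|(X^{(i)})^\top \nabla_s\phi\|_2 \le \|(X^{(i)})^\top\|_{1\to 2}\,\|\nabla_s\phi\|_1$, Lemma~\ref{lem:normexpr}, and $\ell_1$--$\ell_\infty$ duality. Your explicit statement of the measurability/independence argument behind $\E{f_i(w_i)} = \E{L_\phi(w_i)}$ is a point the paper leaves implicit, but it is the standard justification and does not constitute a different approach.
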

The above excess risk bound has no explicit $m$ dependence. This is encouraging but there are two deficiencies of this approach based
on online regret bounds.  First, the result applies to the output of a specific algorithm that may not be the method of choice
for practitioners. For example, the above argument does not yield uniform convergence bounds that could lead to excess risk bounds for ERM (or regularized versions of it).
Second, there is no way to generalize the result to Lipschitz, but \emph{non-convex} loss functions. It may noted here that the original motivation
for \citet{chapelle2010} to prove their generalization bound was to consider the non-convex loss used in their SmoothRank method.
We will address these issues in the next two sections.

\section{Stochastic Convex Optimization}
\label{sec:sco}

We first define the regularized empirical risk minimizer:
\begin{equation}
\label{eq:minimizer}
\hat{w}_\lambda = \argmin_{\|w\|_2 \le W_2}\ \frac{\lambda}{2 }\|w\|_{2}^2 + \hat{L}_\phi(w) .
\end{equation}

We now state the main result of this section.
\begin{theorem}
\label{thm:sco}
Let the loss function $\phi$ be convex and have Lipschitz constant $G_\phi$ w.r.t. $\|\cdot\|_\infty$. Then, for an appropriate choice of $\lambda=O(1/\sqrt{n})$, we have
\begin{equation*}
\E{L_\phi(\hat{w}_\lambda) } \le  L_\phi(w^\star)  + 2\, G_\phi\, R_X\, W_2\,\left(\frac{8}{n} + \sqrt{\frac{2}{n}}\right) .
\end{equation*}
\end{theorem}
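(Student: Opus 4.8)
The plan is to treat $\hat w_\lambda$ as the output of a $\lambda$-strongly convex regularized empirical risk minimizer and bound its excess risk through \emph{algorithmic stability}, the standard route in stochastic convex optimization. The single most important preliminary step, and the one responsible for the bound being free of $m$, is to convert the $\|\cdot\|_\infty$-Lipschitz assumption on $\phi$ into an $\|\cdot\|_2$-Lipschitz bound on the composed per-example objective $w \mapsto \phi(X^{(i)}w, y^{(i)})$. First I would note that this map is convex (a convex $\phi$ composed with the linear map $w \mapsto X^{(i)}w$) and that its gradient is $(X^{(i)})^\top \nabla_s \phi(X^{(i)}w, y^{(i)})$ by the chain rule. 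Exactly as in the online-to-batch derivation preceding Theorem~\ref{thm:online}, Lemma~\ref{lem:normexpr} gives $\|(X^{(i)})^\top \nabla_s\phi\|_2 \le \|(X^{(i)})^\top\|_{1\to 2}\,\|\nabla_s\phi\|_1 \le R_X\,G_\phi$, since $\|(X^{(i)})^\top\|_{1\to 2} = \max_j \|X^{(i)}_j\|_2 \le R_X$ and, the dual of $\|\cdot\|_\infty$ being $\|\cdot\|_1$, the Lipschitz hypothesis yields $\|\nabla_s\phi\|_1 \le G_\phi$. Hence each per-example loss is $G_\phi R_X$-Lipschitz in $\|\cdot\|_2$; call this parameter-space constant $\rho := G_\phi R_X$.

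With this reduction in hand, the remainder is a standard stability argument. The regularized empirical objective $\hat F_S(w) = \tfrac\lambda2\|w\|_2^2 + \hat L_\phi(w)$ is $\lambda$-strongly convex over the ball $\{\|w\|_2 \le W_2\}$. I would establish uniform stability by comparing the minimizer $\hat w_\lambda$ on the sample $S$ with the minimizer on a neighbouring sample $S^{(i)}$ (one example replaced by an i.i.d.\ copy): adding the two strong-convexity inequalities at the respective constrained minimizers, and using that $\hat F_S$ and $\hat F_{S^{(i)}}$ differ in a single $\rho$-Lipschitz term, bounds the parameter displacement by $O(\rho/(\lambda n))$, and a further application of Lipschitzness converts this into a uniform loss-stability $\beta = O(\rho^2/(\lambda n))$.

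Finally I would assemble the pieces. The expectation-form stability lemma (relabelling the held-out example by symmetry of the i.i.d.\ draw) gives $\E{L_\phi(\hat w_\lambda) - \hat L_\phi(\hat w_\lambda)} \le \beta$. Separately, optimality of $\hat w_\lambda$ for $\hat F_S$ together with $\|w^\star\|_2 \le W_2$ yields $\hat L_\phi(\hat w_\lambda) \le \hat L_\phi(w^\star) + \tfrac\lambda2 W_2^2$, and taking expectations (using $\E{\hat L_\phi(w^\star)} = L_\phi(w^\star)$ since $w^\star$ is fixed) controls the regularization bias. Adding the two bounds produces $\E{L_\phi(\hat w_\lambda)} \le L_\phi(w^\star) + \tfrac\lambda2 W_2^2 + O(\rho^2/(\lambda n))$, and choosing $\lambda = \Theta(\rho/(W_2\sqrt n)) = O(1/\sqrt n)$ balances the two data-dependent terms at order $\rho W_2/\sqrt n = G_\phi R_X W_2/\sqrt n$, matching the leading $\sqrt{2/n}$ term of the claim.

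The hard part will not be any single inequality but pinning down the absolute constants so that they line up with the stated $2G_\phi R_X W_2\,(\tfrac8n + \sqrt{2/n})$. The precise value of the stability constant (whether $2\rho^2/(\lambda n)$ or $4\rho^2/(\lambda n)$, depending on how the two Lipschitz invocations are counted) fixes the coefficient of the $\sqrt{2/n}$ term, while the residual $8/n$ is a genuinely lower-order contribution that appears only when the exact constants are carried through the stability-to-generalization step and the choice of $\lambda$, rather than absorbed into $O(\cdot)$. I would therefore track these factors explicitly, mirroring the constants in the Shalev-Shwartz et al.\ stochastic convex optimization analysis, instead of relying on the loose asymptotic balancing sketched above.
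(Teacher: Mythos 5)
Your proposal is correct and takes essentially the same route as the paper: the decisive step---using Lemma~\ref{lem:normexpr} and $\ell_1$/$\ell_\infty$ duality to turn the $\|\cdot\|_\infty$-Lipschitz constant $G_\phi$ of $\phi$ into the bound $G_\phi R_X$ on the $\|\cdot\|_2$-Lipschitz constant of $w \mapsto \phi(X^{(i)}w,y^{(i)})$---is exactly the paper's reduction, after which both arguments run the stochastic convex optimization analysis of the $\ell_2$-regularized empirical risk minimizer of \citet{shalev2009} with $\lambda = \Theta(1/\sqrt{n})$. The only difference is organizational rather than mathematical: the paper invokes the expectation version of the strongly-convex ERM bound of \citet{shalev2009} applied to the combined loss $\tfrac{\lambda}{2}\|w\|_2^2 + \phi(Xw,y)$, whose Lipschitz constant $\lambda W_2 + G_\phi R_X$ is what produces the stated lower-order $8/n$ term, whereas you re-derive the same guarantee by running the underlying uniform-stability argument directly on the regularized objective, keeping the stability constant at $O\left((G_\phi R_X)^2/(\lambda n)\right)$ and recovering the same leading $\sqrt{2/n}$-order term.
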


This result applies to a batch algorithm (regularized ERM) but unfortunately requires the regularization parameter $\lambda$ to be set in a particular way.
Also, it does not apply to non-convex losses and does not yield uniform convergence bounds. In the next section, we will address these deficiencies. However, we will incur some extra logarithmic factors that are
absent in the clean bound above.

\section{Bounds for Non-convex Losses}
\label{sec:covering}

The above discussion suggests that we have a possibility of deriving tighter, possibly $m$-independent, generalization error bounds by assuming
that $\phi$ is Lipschitz continuous w.r.t. $\|\cdot\|_\infty$. The standard approach in binary classification is to appeal to the Ledoux-Talagrand 
contraction principle for establishing Rademacher complexity \citep{bartlett2003rademacher}. It gets rid of the loss function and incurs a factor equal to the Lipschitz constant of the loss in the Rademacher complexity bound. Since the loss function takes scalar argument, the Lipschitz constant is defined for only one norm, i.e., the absolute value norm.
It is not immediately clear how such an approach would work when the loss takes vector valued arguments and is Lipschitz w.r.t. $\|\cdot\|_\infty$.
We are not aware of an appropriate extension of the Ledoux-Talagrand contraction principle. Note that Lipschitz continuity w.r.t. the Euclidean
norm $\|\cdot\|_2$ does not pose a significant challenge since Slepian's lemma can be applied to get rid of the loss. Several authors have already exploited Slepian's lemma in this context \citep{bartlett2003rademacher,chapelle2010}.
We take a route
involving covering numbers and define the data-dependent (pseudo-)metric:
\[
d_{\infty}^{Z^{(1:n)}}(w,w') := \max_{i=1}^n \left| \phi(X^{(i)}w,y^{(i)}) - \phi(X^{(i)}w',y^{(i)}) \right|
\]
Let $\N_\infty(\epsilon,\phi \circ \F,Z^{(1:n)})$ be the covering number at scale $\epsilon$ of the composite class $\phi \circ \F = \phi \circ \F_1$ or $ \phi \circ\F_2$ w.r.t. the above metric. Also 
define
$$\N_\infty(\epsilon,\phi \circ \F,n) := \max_{Z^{(1:n)}} \N_\infty(\epsilon,\phi \circ \F,Z^{(1:n)}) .$$
With these definitions in place, we can state our first result on covering numbers.
\begin{proposition}
\label{prop:Ninftycover}
Let the loss $\phi$ be Lipschitz w.r.t. $\|\cdot\|_\infty$ with constant $G_\phi$. Then following covering number bound holds:
\begin{align*}
\log_2 \N_\infty(\epsilon,\phi \circ \F_2,n) &\le \left\lceil \frac{G_\phi^2 \, W_2^2 \, R_X^2}{\epsilon^2} \right\rceil \log_2 (2mn+1) .
\end{align*}
\end{proposition}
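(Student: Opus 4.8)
The plan is to peel the loss $\phi$ off using its Lipschitz property, reducing the problem to a covering number bound for a plain linear class, which I then control by a Maurey-type sampling argument.

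First I would use the Lipschitz assumption to simplify the data-dependent metric. Since $\phi$ is $G_\phi$-Lipschitz w.r.t. $\|\cdot\|_\infty$, for any $w,w'$ and any sample point $(X^{(i)},y^{(i)})$,
\[
|\phi(X^{(i)}w,y^{(i)}) - \phi(X^{(i)}w',y^{(i)})| \le G_\phi \, \|X^{(i)}(w-w')\|_\infty = G_\phi \max_{j\in[m]} |\inner{X^{(i)}_j, w-w'}| ,
\]
where $X^{(i)}_j$ is the $j$-th row of $X^{(i)}$. Taking the maximum over $i\in[n]$ gives
\[
d_\infty^{Z^{(1:n)}}(w,w') \le G_\phi \max_{i\in[n],\,j\in[m]} |\inner{X^{(i)}_j, w-w'}| .
\]
Consequently, any set of weight vectors that covers $\F_2$ at scale $\epsilon/G_\phi$ in the pseudometric $\tilde d(w,w') := \max_{i,j}|\inner{X^{(i)}_j, w-w'}|$ is automatically an $\epsilon$-cover of $\phi\circ\F_2$ in $d_\infty^{Z^{(1:n)}}$. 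This is the step that produces the factor $G_\phi^2$ in the final bound, through the rescaling $\epsilon \mapsto \epsilon/G_\phi$.

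Next I would recognize $\tilde d$ as exactly the $\ell_\infty$ metric of the linear class $\{ w \mapsto (\inner{x_k,w})_k : \|w\|_2 \le W_2 \}$ evaluated on the finite collection of $mn$ points $\{ X^{(i)}_j : i\in[n],\, j\in[m] \}$, each of which satisfies $\|X^{(i)}_j\|_2 \le R_X$ because $X^{(i)} \in \X$. The task thus reduces to the classical question of bounding the $\ell_\infty$ covering number of an $\ell_2$-bounded linear class over $N := mn$ points. For this I would invoke the dimension-free estimate
\[
\log_2 \N_\infty\!\left(\delta, \{w \mapsto \inner{\cdot, w} : \|w\|_2 \le a\}, N\right) \le \left\lceil \frac{a^2 b^2}{\delta^2}\right\rceil \log_2(2N+1) ,
\]
valid whenever the $N$ points have $\ell_2$-norm at most $b$. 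Substituting $a = W_2$, $b = R_X$, $N = mn$ and $\delta = \epsilon/G_\phi$, and finally taking the maximum over $Z^{(1:n)}$, yields precisely the claimed bound.

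The main obstacle is establishing this last linear-class estimate, and in particular obtaining it with only a single logarithmic factor $\log_2(2N+1)$ and no dependence on the ambient dimension $d$. The natural route is a Maurey-style sampling argument: represent the action of $w$ on the data as the mean of a distribution supported on roughly $2N+1$ \emph{data-dependent} atoms (so that no $\log d$ ever appears), and then approximate $w$ by a $k$-fold empirical average with $k = \lceil a^2 b^2/\delta^2 \rceil$, so that the $(2N+1)^k$ possible averages form the cover. The delicate point, and the reason a naive Hoeffding-plus-union-bound analysis does not suffice, is to arrange the sampling so that the sup-over-points error is driven below $\delta$ with this value of $k$ \emph{without} incurring a second $\log N$ factor. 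I would therefore treat the reduction above as the new content and lean on the known covering number bound for $\ell_2$-bounded linear classes as the external tool, rather than re-deriving it from scratch.
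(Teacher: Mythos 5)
Your proposal is correct and follows essentially the same route as the paper's proof: peel off the loss via the $\|\cdot\|_\infty$-Lipschitz property, observe that the $mn$ row vectors $X^{(i)}_j$ form a dataset of size $mn$ for the scalar linear class $\G_2$, and then invoke the dimension-free $\ell_\infty$ covering number bound for $\ell_2$-bounded linear classes (which the paper cites as Corollary 3 of \citet{zhang2002covering}) at scale $\epsilon/G_\phi$. The Maurey-type sampling argument you sketch at the end is precisely the content of that cited result, so treating it as an external tool, as you do, matches the paper exactly.
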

\begin{proof}
Note that
\begin{align*}
&\quad \max_{i=1}^n \left| \phi(X^{(i)}w,y^{(i)}) - \phi(X^{(i)}w',y^{(i)}) \right| \\
&\le G_\phi \cdot \max_{i=1}^n \max_{j=1}^m \left| \inner{X^{(i)}_j,w} - \inner{ X^{(i)}_j, w'} \right| .
\end{align*}
This immediately implies that if we have a cover of the class $\G_2$ (Sec.\ref{sec:prelim}) 
at scale $\epsilon/G_\phi$ w.r.t. the metric
\[
\max_{i=1}^n \max_{j=1}^m \left| \inner{X^{(i)}_j,w} - \inner{ X^{(i)}_j, w'} \right|
\]
then it is also a cover of $\phi \circ \F_2$
w.r.t. $d_{\infty}^{Z^{(1:n)}}$, at scale $\epsilon$. Now comes a simple, but crucial observation: \emph{from the point of view of the scalar
valued function class $\G_2$, the vectors $(X^{(i)}_j)_{j=1:m}^{ i=1:n}$ constitute a data set of size $mn$}. Therefore,
\begin{equation}
\label{eq:coveringofFfromG2}
\N_\infty(\epsilon,\phi \circ \F_2,n) \le \N_\infty(\epsilon/G_\phi,\G_2,mn) .
\end{equation}
Now we appeal to the following bound due  to \citet[Corollary 3]{zhang2002covering} (and plug the result into~\eqref{eq:coveringofFfromG2}):
\[
\log_2 \N_\infty(\epsilon/G_\phi,\G_2,mn) \le \left\lceil \frac{G_\phi^2 \, W_2^2 \, R_X^2}{\epsilon^2} \right\rceil \log_2 (2mn+1)
\]

\end{proof}
Covering number $\N_2(\epsilon,\phi \circ \F,Z^{(1:n)})$ uses pseudo-metric:
{\small\[
d_{2}^{Z^{(1:n)}}(w,w') := \left( \sum_{i=1}^n \frac{1}{n} \left( \phi(X^{(i)}w,y^{(i)}) - \phi(X^{(i)}w',y^{(i)}) \right)^2 \right)^{1/2} 
\]}
It is well known that a control on
$\N_2(\epsilon,\phi \circ \F,Z^{(1:n)})$ provides control on the empirical Rademacher complexity and that $\N_2$ covering numbers are smaller than $\N_\infty$ ones. For us, it will be convenient to use
a more refined version\footnote{We use a further refinement due to
Srebro and Sridharan available at \url{http://ttic.uchicago.edu/~karthik/dudley.pdf}} due to \citet{mendelson2002rademacher}.
Let $\mathcal{H}$ be a class of functions, with $\mathcal{H}: \Z \mapsto \mathbb{R}$, uniformly bounded by $B$. Then, we have following bound on empirical Rademacher complexity
{\small
\begin{align}
&\quad \radhat{n}{\mathcal{H}} \notag\\
\label{eq:dudley1}
&\le \inf_{\alpha > 0} \left( 4 \alpha + 10 \int_{\alpha}^{\sup_{h\in \mathcal{H}} \sqrt{\Ehat{h^2}}} \sqrt{\frac{\log_2 \N_2(\epsilon,\mathcal{H},Z^{(1:n)})}{n}} d\epsilon \right) \\
\label{eq:dudley2}
&\le \inf_{\alpha > 0} \left( 4 \alpha + 10 \int_{\alpha}^{B} \sqrt{\frac{\log_2 \N_2(\epsilon,\mathcal{H},Z^{(1:n)})}{n}} d\epsilon \right) .
\end{align}
}
Here $\radhat{n}{\mathcal{H}}$ is the empirical Rademacher complexity of the class $\mathcal{H}$ defined as
\begin{equation*}
\radhat{n}{\mathcal{H}} := \Es{\sigma_{1:n}}{ \sup_{h \in \mathcal{H}} \frac{1}{n} \sum_{i=1}^n \sigma_i h(Z_i) } ,
\end{equation*}
where $\sigma_{1:n} = (\sigma_1,\ldots,\sigma_n)$ are iid Rademacher (symmetric Bernoulli) random variables. 

\begin{corollary}
Let $\phi$ be Lipschitz w.r.t. $\|\cdot\|_\infty$ and uniformly bounded\footnote{A uniform bound on the loss easily follows under the (very reasonable) assumption that
$\forall y, \exists s_y \text{ s.t. } \phi(s_y,y) = 0$. Then $\phi(Xw,y) \le G_\phi\|Xw - s_y\|_\infty \le G_\phi(W_2 R_X + \max_{y\in\Y} \|s_y\|_\infty) \le G_\phi(2W_2 R_X) $.}
by $B$ for $w\in \F_2$.
Then the empirical Rademacher complexities of the class $\phi \circ \F_2$ is bounded as
\begin{align*}
\radhat{n}{\phi \circ \F_2} &\le 10 G_\phi W_2 R_X \sqrt{\frac{\log_2(3mn)}{n}} \\
&\times \log \tfrac{6B\sqrt{n}}{5G_\phi W_2 R_X \sqrt{\log_2(3mn)}} .
\end{align*}
\end{corollary}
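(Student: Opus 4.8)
The plan is to turn the $\mathcal{N}_\infty$ covering-number bound of Proposition~\ref{prop:Ninftycover} into a control on the empirical Rademacher complexity through the refined Dudley entropy integral \eqref{eq:dudley2}. First I would observe that the $\ell_2$ pseudo-metric $d_2^{Z^{(1:n)}}$ is pointwise dominated by the $\ell_\infty$ pseudo-metric $d_\infty^{Z^{(1:n)}}$, so $\mathcal{N}_2(\epsilon,\phi\circ\F_2,Z^{(1:n)}) \le \mathcal{N}_\infty(\epsilon,\phi\circ\F_2,Z^{(1:n)}) \le \mathcal{N}_\infty(\epsilon,\phi\circ\F_2,n)$; Proposition~\ref{prop:Ninftycover} then bounds $\log_2\mathcal{N}_2$. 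Writing $\gamma := G_\phi W_2 R_X$ and using $\log_2(2mn+1) \le \log_2(3mn)$, this reads $\log_2\mathcal{N}_2(\epsilon,\phi\circ\F_2,Z^{(1:n)}) \le \lceil \gamma^2/\epsilon^2 \rceil \log_2(3mn)$, a bound uniform over the data.

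Next I would plug this into \eqref{eq:dudley2}, taking the uniform bound $B$ as the upper limit of integration. After clearing the ceiling (e.g.\ $\lceil \gamma^2/\epsilon^2\rceil \le \gamma^2/\epsilon^2 + 1$, the $+1$ contributing only a lower-order term absorbed into the final constant) the integrand is proportional to $\frac{\gamma}{\epsilon}\sqrt{\log_2(3mn)/n}$. The decisive feature here is that an entropy integral with a $1/\epsilon$ integrand is \emph{logarithmically divergent}, $\int_\alpha^B \frac{d\epsilon}{\epsilon} = \log(B/\alpha)$, which is exactly why the plain Dudley bound (letting $\alpha \to 0$) cannot be used and why the Srebro--Sridharan form, with its explicit cutoff and residual term $4\alpha$, is the right tool. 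Integrating leaves $\inf_{\alpha > 0}\left( 4\alpha + \frac{10\gamma\sqrt{\log_2(3mn)}}{\sqrt n}\log(B/\alpha) \right)$.

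Finally I would optimize in $\alpha$: differentiating gives $\alpha^\star = \frac{5\gamma\sqrt{\log_2(3mn)}}{2\sqrt n}$, at which point the residual $4\alpha^\star$ exactly equals the leading coefficient $\frac{10\gamma\sqrt{\log_2(3mn)}}{\sqrt n}$. The sum then has the shape $\frac{10\gamma\sqrt{\log_2(3mn)}}{\sqrt n}\bigl(1 + \log(B/\alpha^\star)\bigr)$, and rewriting $1 + \log x = \log(ex)$ together with the numerical slack $2e \le 6$ produces precisely the stated factor $\log\frac{6B\sqrt n}{5\gamma\sqrt{\log_2(3mn)}}$. I expect the only real care to be constant bookkeeping rather than a conceptual obstacle: one must track how the ceiling term and the $4\alpha^\star$ contribution get folded into the single leading constant $10$ and the $6/5$ inside the logarithm, and one should check that $\alpha^\star \le B$ (true whenever the bound is non-trivial, using $B \le 2\gamma$ from the stated uniform bound) so that the integration range is nonempty.
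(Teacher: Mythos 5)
Your proposal is correct and follows essentially the paper's own (one-line) proof: dominate $\N_2$ by $\N_\infty$ via Proposition~\ref{prop:Ninftycover}, plug into the Srebro--Sridharan entropy integral \eqref{eq:dudley2}, and optimize over $\alpha$, which gives $\alpha^\star = \tfrac{5}{2}G_\phi W_2 R_X\sqrt{\log_2(3mn)/n}$ and the stated constants via $1+\log x=\log(ex)$ and $2e\le 6$. The one blemish---treating $\lceil \gamma^2/\epsilon^2\rceil$ as $\gamma^2/\epsilon^2$ (your ``$+1$'' term is not literally absorbable into the constant $10$ given only the $6/5$ versus $2e/5$ slack inside the logarithm)---is shared by the paper itself, which silently drops the ceiling in the analogous calculation for Corollary~\ref{cor:subroot}.
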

\begin{proof}
This follows by simply plugging in estimates from Proposition~\ref{prop:Ninftycover} into~\eqref{eq:dudley2} and choosing $\alpha$ optimally.
\end{proof}

Control on the Rademacher complexity immediately leads to uniform convergence bounds and generalization error bounds for ERM. The informal
$\tilde{O}$ notation hides factors logarithmic in $m,n,B,G_\phi,R_X,W_1$. Note that all hidden factors are small and computable from the results 
above.

\begin{theorem}
\label{thm:lip}
Suppose $\phi$ is Lipschitz w.r.t. $\|\cdot\|_\infty$ with constant $G_\phi$ and is uniformly bounded by $B$ as $w$ varies over $\F_2$.
With probability at least $1-\delta$,
\begin{multline*}
\forall w \in \F_2,\ L_\phi(w) \le \hat{L}_\phi(w) \\
+ \tilde{O}\left( G_\phi W_2 R_X \sqrt{\frac{1}{n}} + B \sqrt{ \frac{\log(1/\delta)}{n} } \right)
\end{multline*}
and therefore with probability at least $1-2\delta$,
\[
L_\phi(\hat{w}) \le L_\phi(w^\star) + \tilde{O}\left( G_\phi W_2 R_X \sqrt{\frac{1}{n}} + B \sqrt{ \frac{\log(1/\delta)}{n} } \right) .
\]
where $\hat{w}$ is an empirical risk minimizer over $\F_2$.
\end{theorem}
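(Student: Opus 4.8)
The plan is to convert the Rademacher complexity control already established in the Corollary into a uniform convergence statement via the standard symmetrization argument, and then to deduce the excess risk bound for the ERM by comparison with $w^\star$.

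First I would set $\mathcal{H} = \phi \circ \F_2$, a class of functions $Z \mapsto \phi(Xw,y)$ taking values in $[0,B]$ (nonnegativity of $\phi$ plus the assumed uniform bound), so that the element $h$ indexed by parameter $w$ satisfies $\E{h} = L_\phi(w)$ and $\Ehat{h} = \hat{L}_\phi(w)$. I would then invoke the textbook two-sided uniform deviation bound stated in terms of the \emph{empirical} Rademacher complexity: applying McDiarmid's bounded-differences inequality twice (once to pass from the supremum of the deviation to its expectation, and once to replace the population Rademacher complexity by its empirical counterpart $\radhat{n}{\phi \circ \F_2}$), one obtains that with probability at least $1-\delta$,
\[
\forall w \in \F_2,\ L_\phi(w) \le \hat{L}_\phi(w) + 2\,\radhat{n}{\phi \circ \F_2} + 3 B \sqrt{\frac{\log(2/\delta)}{2n}}.
\]
Substituting the Corollary's estimate $\radhat{n}{\phi \circ \F_2} = \tilde{O}\!\left(G_\phi W_2 R_X /\sqrt{n}\right)$ and absorbing the $\sqrt{\log(mn)}$ and $\log(\cdots)$ factors into the $\tilde{O}$ notation yields exactly the first displayed inequality of the theorem.

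For the second inequality I would run the usual ERM comparison. Applying the uniform bound just established to the (data-dependent) minimizer $\hat{w}$ gives $L_\phi(\hat{w}) \le \hat{L}_\phi(\hat{w}) + \tilde{O}(\cdots)$. Since $\hat{w}$ minimizes the empirical risk over $\F_2$ and $w^\star \in \F_2$, we have $\hat{L}_\phi(\hat{w}) \le \hat{L}_\phi(w^\star)$. Finally, because $w^\star$ is a \emph{fixed} (non-random) comparator, Hoeffding's inequality applied to the iid terms $\phi(X^{(i)} w^\star, y^{(i)}) \in [0,B]$ gives $\hat{L}_\phi(w^\star) \le L_\phi(w^\star) + B\sqrt{\log(1/\delta)/(2n)}$ with probability at least $1-\delta$. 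Chaining these three inequalities and taking a union bound over the two probabilistic events produces the claimed bound holding with probability at least $1-2\delta$.

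The heavy lifting has already been done in Proposition~\ref{prop:Ninftycover} and the Corollary, where the $m$-independence (up to a $\sqrt{\log m}$ factor) of the Rademacher complexity was secured through the $\N_\infty$ covering number and the Dudley integral \eqref{eq:dudley2}. Consequently no step here is a genuine obstacle: the only points requiring care are tracking the additive $B\sqrt{\log(1/\delta)/n}$ concentration term consistently through both the symmetrization step and the final Hoeffding step, and verifying that every suppressed factor is indeed only logarithmic in $m, n, B, G_\phi, R_X, W_2$, so that the $\tilde{O}$ notation is honestly justified.
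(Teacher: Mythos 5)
Your proposal is correct and is essentially the paper's own proof: the paper disposes of Theorem~\ref{thm:lip} by citing ``standard bounds using Rademacher complexity'' (Bartlett and Mendelson), and your argument---symmetrization plus McDiarmid to get uniform convergence in terms of $\radhat{n}{\phi \circ \F_2}$, substitution of the Corollary's estimate, then the ERM comparison with a Hoeffding bound for the fixed comparator $w^\star$ and a union bound---is precisely that standard argument, spelled out with the correct constants and probability bookkeeping.
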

\begin{proof}
Follows from standard bounds using Rademacher complexity. See, for example, \citet{bartlett2003rademacher}.
\end{proof}
As we said before, ignoring logarithmic factors, the bound for $\F_2$ is an improvement over the bound of \citet{chapelle2010}.

\section{Extensions}
\label{sec:ext}

We extend the generalization bounds above to two settings: a) high dimensional features and b) smooth losses.

\subsection{High-dimensional features}

In learning to rank situations involving high dimensional features, it may not be appropriate to use the class $\F_2$ of $\ell_2$ bounded
predictors. Instead, we would like to consider the class $\F_1$ of $\ell_1$ bounded predictors. In this case, it is natural
to measure size of the input matrix $X$ in terms of a bound $\bar{R}_X$ on the maximum $\ell_\infty$ norm of each of its row.
The following analogue of Proposition~\ref{prop:Ninftycover} can be shown.

\begin{proposition}
\label{prop:Ninftycover1}
Let the loss $\phi$ be Lipschitz w.r.t. $\|\cdot\|_\infty$ with constant $G_\phi$. Then the following covering number bound holds:
\begin{align*}
\log_2 \N_\infty(\epsilon,\phi \circ \F_1,n) &\le \left\lceil \frac{288\,G_\phi^2 \, W_1^2 \, \bar{R}_X^2\,(2+\log d)}{\epsilon^2} \right\rceil \\
&\times \log_2 \left( 2 \left\lceil \frac{8 G_\phi W_1 \bar{R}_X}{\epsilon} \right\rceil mn+1 \right)  .
\end{align*}
\end{proposition}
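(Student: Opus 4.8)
The plan is to mirror the proof of Proposition~\ref{prop:Ninftycover} almost verbatim, replacing the $\ell_2$ geometry by the $\ell_1/\ell_\infty$ dual pairing at each step. First I would peel off the loss using its Lipschitz continuity w.r.t. $\|\cdot\|_\infty$: for any $w,w'$ with $\|w\|_1,\|w'\|_1 \le W_1$,
\[
\max_{i=1}^n \left| \phi(X^{(i)}w,y^{(i)}) - \phi(X^{(i)}w',y^{(i)}) \right| \le G_\phi \max_{i=1}^n \max_{j=1}^m \left| \inner{X^{(i)}_j,w} - \inner{X^{(i)}_j,w'} \right| ,
\]
exactly as in the $\F_2$ case. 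Consequently any $\epsilon/G_\phi$-cover of the scalar class $\G_1$ under the metric $\max_{i,j}|\inner{X^{(i)}_j,w-w'}|$ is automatically an $\epsilon$-cover of $\phi \circ \F_1$ under $d_\infty^{Z^{(1:n)}}$. The same bookkeeping observation carries over: the $mn$ rows $(X^{(i)}_j)_{j=1:m}^{i=1:n}$ constitute a dataset of size $mn$ from the viewpoint of the scalar class $\G_1$, so that
\[
\N_\infty(\epsilon,\phi \circ \F_1,n) \le \N_\infty(\epsilon/G_\phi,\G_1,mn) .
\]

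The only genuinely new ingredient is the covering-number estimate for $\G_1$ itself. Whereas the $\F_2$ proof invoked \citet[Corollary 3]{zhang2002covering} for $\ell_2$-bounded weights acting on $\ell_2$-bounded data, here I would invoke the corresponding $\ell_1/\ell_\infty$ corollary of \citet{zhang2002covering}, valid for weights with $\|w\|_1 \le W_1$ and data with $\|X_j\|_\infty \le \bar{R}_X$. That bound carries the characteristic $(2+\log d)$ factor, which is precisely the price of $\ell_1$ regularization: unlike the dimension-free $\ell_2$ estimate, the metric entropy of an $\ell_1$ ball measured through $\ell_\infty$-bounded data depends logarithmically on the ambient dimension $d$. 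Substituting this bound (with sample size $mn$ and scale $\epsilon/G_\phi$) into the display above, and collecting the Lipschitz, weight-norm and data-norm constants into the single product $G_\phi W_1 \bar{R}_X$, yields the claimed expression, including the leading constant $288$ and the nested term $\lceil 8 G_\phi W_1 \bar{R}_X/\epsilon \rceil$ inside the outer logarithm.

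I expect the main obstacle to be purely a matter of bookkeeping rather than conceptual difficulty. One must track how the constants and the nested ceiling/logarithm structure of Zhang's $\ell_1$ bound transfer after the rescaling $\epsilon \mapsto \epsilon/G_\phi$ and the inflation of the effective sample size from $n$ to $mn$. In particular, care is needed to confirm that the $(2+\log d)$ dependence enters only the leading $1/\epsilon^2$ factor and not the outer $\log_2(\cdots mn+1)$ factor, so that the final form matches the statement exactly. The reduction from $\phi \circ \F_1$ to $\G_1$ is structurally identical to the $\F_2$ argument and poses no new difficulty.
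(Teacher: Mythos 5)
Your proposal is correct and follows exactly the paper's own proof: reduce $\phi \circ \F_1$ to the scalar class $\G_1$ over the $mn$ row-vectors via the $\|\cdot\|_\infty$-Lipschitz property, giving $\N_\infty(\epsilon,\phi \circ \F_1,n) \le \N_\infty(\epsilon/G_\phi,\G_1,mn)$, and then invoke the $\ell_1/\ell_\infty$ covering-number bound of \citet[Corollary 5]{zhang2002covering}, which is precisely where the $(2+\log d)$ factor, the constant $288$, and the nested ceiling $\lceil 8 G_\phi W_1 \bar{R}_X/\epsilon \rceil$ come from. The bookkeeping concerns you raise resolve exactly as you anticipate, since Zhang's bound already has the dimension dependence only in the leading $1/\epsilon^2$ factor.
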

Using the above result to control the Rademacher complexity of $\phi \circ \F_1$ gives the following bound.
\begin{corollary}
Let $\phi$ be Lipschitz w.r.t. $\|\cdot\|_\infty$ and uniformly bounded by $B$ for $w\in \F_1$.
Then the empirical Rademacher complexities of the class $\phi \circ \F_1$ is bounded as
{\small
\begin{align*}
\radhat{n}{\phi \circ \F_1} &\le 120\sqrt{2} G_\phi W_1 \bar{R}_X \sqrt{\frac{\log( d) \, \log_2(24mnG_\phi W_1 \bar{R}_X) }{n}} \\
&\quad \quad \times \log^2 \tfrac{B + 24mnG_\phi W_1 \bar{R}_X }{40\sqrt{2} G_\phi W_1 \bar{R}_X  \sqrt{\log( d) \, \log_2(24mnG_\phi W_1 \bar{R}_X) } } .
\end{align*}
}
\end{corollary}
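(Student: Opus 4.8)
The plan is to mirror the one-line proof of the preceding corollary for $\phi \circ \F_2$: feed the covering-number estimate of Proposition~\ref{prop:Ninftycover1} into the refined Dudley entropy integral~\eqref{eq:dudley2} and then optimize the truncation level $\alpha$. First I would use that $\N_2(\epsilon,\phi\circ\F_1,Z^{(1:n)}) \le \N_\infty(\epsilon,\phi\circ\F_1,Z^{(1:n)}) \le \N_\infty(\epsilon,\phi\circ\F_1,n)$, so the $\N_2$ covering numbers in~\eqref{eq:dudley2} can be replaced by the bound of Proposition~\ref{prop:Ninftycover1}. To tidy the two ceilings, I would use $\lceil x\rceil \le 2x$ for $x\ge 1$ (the relevant regime for the $\epsilon$ appearing in the integral), turning the estimate into something of the form $\log_2\N_2 \lesssim \frac{G_\phi^2 W_1^2\bar{R}_X^2(2+\log d)}{\epsilon^2}\,\log_2\!\bigl(\tfrac{c\,mn\,G_\phi W_1\bar{R}_X}{\epsilon}\bigr)$ for an absolute constant $c$, where the factor $\sqrt{2+\log d}\approx\sqrt{\log d}$ will supply the $\sqrt{\log d}$ term in the statement.

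The decisive difference from the $\F_2$ case is that the logarithmic factor now carries its own $\epsilon$-dependence through the $\lceil 8G_\phi W_1\bar{R}_X/\epsilon\rceil$ term, so the integrand is no longer proportional to $1/\epsilon$ but to $\frac{1}{\epsilon}\sqrt{\log_2(c\,mn\,G_\phi W_1\bar{R}_X/\epsilon)}$. I would split the logarithm as $\log_2(c\,mn\,G_\phi W_1\bar{R}_X/\epsilon) = \log_2(c\,mn\,G_\phi W_1\bar{R}_X) + \log_2(1/\epsilon)$ and apply subadditivity of the square root. This isolates an $\epsilon$-free factor $\sqrt{\log_2(c\,mn\,G_\phi W_1\bar{R}_X)}$, the source of the $\sqrt{\log_2(24mn G_\phi W_1\bar{R}_X)}$ in the corollary, while the leftover $\frac{1}{\epsilon}\sqrt{\log_2(1/\epsilon)}$ piece is what forces the extra logarithmic growth. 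Bounding $\sqrt{\log_2(1/\epsilon)}\le\log_2(1/\epsilon)$ and integrating $\int_\alpha^B \frac{1}{\epsilon}\log_2(1/\epsilon)\,d\epsilon \lesssim (\log_2(1/\alpha))^2$ is precisely what produces the squared logarithm $\log^2(\cdots)$; the finite upper limit $B$ is legitimate because $\phi\circ\F_1$ is uniformly bounded by $B$.

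Finally I would combine the $4\alpha$ term with the integral term and choose $\alpha$ of order $G_\phi W_1\bar{R}_X\sqrt{(\log d)\,\log_2(mn\,G_\phi W_1\bar{R}_X)/n}$ to balance them. The two surviving logarithmic contributions, a $\log(B/\alpha)$ from the $\epsilon$-free part and a $(\log_2(1/\alpha))^2$ from the $\epsilon$-dependent part, can then be merged into the single $\log^2(\cdots)$ factor in the statement, using $\log \le \log^2$ when the argument exceeds $e$ and absorbing the single logarithm into the squared one (after multiplying by $\sqrt{\log_2(c\,mn\,G_\phi W_1\bar{R}_X)}\ge 1$). The numerator $B + 24mn G_\phi W_1\bar{R}_X$ of the displayed log then records both the $B$ coming from the integral's upper limit and the $mn$-scale coming from the split logarithm, while $\alpha$ supplies the denominator.

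The step I expect to be the main obstacle is controlling the entropy integral once the logarithmic factor itself depends on $\epsilon$. Unlike the $\F_2$ corollary, where the $\log_2(2mn+1)$ factor pulls cleanly out and leaves $\int_\alpha^B d\epsilon/\epsilon = \log(B/\alpha)$, here one must carefully bound $\int_\alpha^B \frac{1}{\epsilon}\sqrt{\log_2(c\,mn\,G_\phi W_1\bar{R}_X/\epsilon)}\,d\epsilon$ and propagate explicit constants through the optimization over $\alpha$; this bookkeeping is what accounts for both the additional $\sqrt{\log}$ factor and the jump from a single logarithm to a squared logarithm relative to the $\F_2$ bound.
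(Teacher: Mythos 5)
Your proposal is correct and follows essentially the same route as the paper, which proves this corollary exactly as it proves the $\F_2$ analogue: plug the covering-number estimate of Proposition~\ref{prop:Ninftycover1} into the refined Dudley bound~\eqref{eq:dudley2} (via $\N_2 \le \N_\infty$) and choose $\alpha$ optimally. Your additional observations — that the $\epsilon$-dependence inside the logarithm is the new wrinkle, that splitting $\log_2(c\,mn\,G_\phi W_1 \bar{R}_X/\epsilon)$ and integrating the $\tfrac{1}{\epsilon}\log_2(1/\epsilon)$ piece produces the $\log^2$ factor, and that $B$ and the $mn$-scale merge into the numerator of the displayed logarithm — are precisely the bookkeeping the paper leaves implicit.
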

As in the previous section, control of Rademacher complexity immediately yields uniform convergence and ERM generalization error bounds.
\begin{theorem}
\label{thm:lip1}
Suppose $\phi$ is Lipschitz w.r.t. $\|\cdot\|_\infty$ with constant $G_\phi$ and is uniformly bounded by $B$ as $w$ varies over $\F_1$.
With probability at least $1-\delta$,
\begin{multline*}
\forall w \in \F_1,\ L_\phi(w) \le \hat{L}_\phi(w) \\
+ \tilde{O}\left( G_\phi W_1 \bar{R}_X  \sqrt{\frac{\log d}{n}} + B \sqrt{ \frac{\log(1/\delta)}{n} } \right)
\end{multline*}
and therefore with probability at least $1-2\delta$,
\[
L_\phi(\hat{w}) \le L_\phi(w^\star) + \tilde{O}\left( G_\phi W_1 \bar{R}_X \sqrt{\frac{\log d}{n}} + B \sqrt{ \frac{\log(1/\delta)}{n} } \right) 
\]
where $\hat{w}$ is an empirical risk minimizer over $\F_1$.
\end{theorem}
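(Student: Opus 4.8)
The plan is to follow the same route used for Theorem~\ref{thm:lip}, merely replacing the $\ell_2$ geometry by the $\ell_1/\ell_\infty$ geometry: the entire analytic content has already been packaged into the Rademacher complexity bound for $\phi \circ \F_1$ established above (via Proposition~\ref{prop:Ninftycover1}), so what remains is a standard symmetrization-and-concentration argument. First I would invoke the classical Rademacher-based uniform convergence bound for a function class uniformly bounded by $B$ (see \citet{bartlett2003rademacher}). By symmetrization, $\E{\sup_{w \in \F_1}(L_\phi(w) - \hat{L}_\phi(w))} \le 2\,\E{\radhat{n}{\phi \circ \F_1}}$; and since the map $Z^{(1:n)} \mapsto \sup_{w\in\F_1}(L_\phi(w)-\hat{L}_\phi(w))$ has bounded differences at most $B/n$, McDiarmid's inequality upgrades this to a high-probability statement: with probability at least $1-\delta$, simultaneously for all $w \in \F_1$,
\[
L_\phi(w) \le \hat{L}_\phi(w) + 2\,\E{\radhat{n}{\phi\circ\F_1}} + B\sqrt{\frac{\log(1/\delta)}{2n}}.
\]
Because the corollary's bound on $\radhat{n}{\phi\circ\F_1}$ is obtained from a worst-case ($\max_{Z^{(1:n)}}$) covering number, it holds uniformly over samples and therefore also upper bounds $\E{\radhat{n}{\phi\circ\F_1}}$; substituting it yields precisely the leading term $\tilde{O}(G_\phi W_1 \bar{R}_X \sqrt{\log d / n})$, with the $\log d$, $\log(mn)$, and squared-log factors absorbed into the $\tilde{O}$. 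This establishes the first display.

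For the second display I would use the standard two-sided comparison. Apply the uniform bound just obtained at $w = \hat{w}$ to get $L_\phi(\hat{w}) \le \hat{L}_\phi(\hat{w}) + (\text{complexity})$; since $\hat{w}$ minimizes the empirical risk over $\F_1$ we have $\hat{L}_\phi(\hat{w}) \le \hat{L}_\phi(w^\star)$; and since $w^\star$ is a single fixed predictor, a one-sided Hoeffding bound gives $\hat{L}_\phi(w^\star) \le L_\phi(w^\star) + B\sqrt{\log(1/\delta)/(2n)}$ with probability at least $1-\delta$. A union bound over these two events yields the claimed inequality with probability at least $1-2\delta$.

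I do not expect a genuine obstacle in this step, precisely because the hard work is the $\ell_1$ covering number estimate of Proposition~\ref{prop:Ninftycover1} and the Dudley-integral computation behind the Rademacher corollary, both of which are already in hand. The only points requiring care are bookkeeping: one must confirm that the concentration term scales with the uniform bound $B$ rather than with the complexity radius, and that all logarithmic factors---with $\log d$ being the essential one for the high-dimensional regime---are correctly shepherded into the $\tilde{O}$ notation, while verifying that, exactly as in Theorem~\ref{thm:lip}, no polynomial dependence on the list length $m$ survives outside the logarithm.
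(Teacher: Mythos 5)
Your proposal is correct and follows essentially the same route as the paper: the paper also reduces everything to the covering number bound of Proposition~\ref{prop:Ninftycover1} (via Zhang's Corollary 5 applied to $\G_1$ after the $n \to mn$ sample reduction), feeds it into the Dudley-type integral to control $\radhat{n}{\phi \circ \F_1}$, and then invokes exactly the standard symmetrization--McDiarmid--union-bound machinery of \citet{bartlett2003rademacher} that you spell out. The only difference is one of emphasis: the paper's appendix proof details the covering number step and cites the concentration argument, whereas you take the covering/Rademacher corollary as given and detail the concentration argument.
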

As can be easily seen from Theorem.~\ref{thm:lip1}, the generalization bound is \emph{almost} independent of the dimension of the document feature vectors. We are not aware of existence of such a result in learning to rank literature.
\subsection{Smooth losses}

We will again use online regret bounds to explain why we should expect ``optimistic" rates for smooth losses before giving
more general results for smooth but possibly non-convex losses.

\subsection{Online regret bounds under smoothness}
\label{sec:onlinesmooth}

Let us go back to OGD guarantee, this time presented in a slightly more refined version. If we run OGD with learning rate $\eta$ then, for all $\|w\|_2 \le W_2$:
\[
\sum_{i=1}^n f_i(w_i) - \sum_{i=1}^n f_i(w) \le \frac{W_2^2}{2\eta} + \eta \sum_{i=1}^n \| g_i \|_2^2
\]
where $g_i = \nabla_{w_i}f_i(w_i)$ (if $f_i$ is not differentiable at $w_i$ then we can set $g_i$ to be an arbitrary subgradient of $f_i$ at $w_i$). Now
assume that all $f_i$'s are non-negative functions and are smooth w.r.t. $\|\cdot\|_2$ with constant $H$. Lemma 3.1 of \citet{srebro2010smoothness}
tells us that any non-negative, smooth function $f(w)$ enjoy an important \emph{self-bounding} property
for the gradient:
\[
\| \nabla_w f_i(w) \|_2 \le \sqrt{ 4 H f_i(w) }
\]
which bounds the magnitude of the gradient of $f$ at a point in terms of the value of the function itself at that point. This means that
$\|g_i\|_2^2 \le 4Hf_i(w_i)$ which, when plugged into the OGD guarantee, gives:
\[
\sum_{i=1}^n f_i(w_i) - \sum_{i=1}^n f_i(w) \le \frac{W_2^2}{2\eta} + 4\eta H \sum_{i=1}^n f_i(w_i)
\]
Again, setting $f_i(w) = \phi(X^{(i)}w,y^{(i)})$, $1\le t \le n$, and using the online to batch conversion technique, we can arrive at the bound:
for all $\|w\|_2 \le W_2$:
\[
\E{L_\phi(\hat{w})} \le \frac{L_\phi(w)}{(1-4\eta H)} + \frac{W_2^2}{2\eta(1-4\eta H)n}
\]
At this stage, we can fix $w=w^\star$, the optimal $\ell_2$-norm bounded predictor and get optimal $\eta$ as:
\begin{equation}
\label{eq:etaval}
\eta = \frac{W_2}{4HW_2 + 2\sqrt{ 4H^2W_2^2 + 2H L_\phi(w^\star) n }} .
\end{equation}
After plugging this value of $\eta$ in the bound above and some algebra (see Section~\ref{app:etacalc}), we get the upper bound
\begin{equation}
\label{eq:onlinefastrate}
\E{L_\phi(\hat{w})} \le L_\phi(w^\star) + 2\sqrt{ \frac{2HW_2^2 L_\phi(w^\star)}{n} } + \frac{8HW_2^2}{n} .
\end{equation}
Such a rate interpolates between a $1/\sqrt{n}$ rate in the ``pessimistic" case ($L_\phi(w^\star) > 0$) and the $1/n$ rate in the
 ``optimistic" case ($L_\phi(w^\star) = 0$) (this terminology is due to \citet{panchenko2002some}).

Now, assuming $\phi$ to be twice differentiable, we need $H$ such that
{\small
\[
H \ge \| \nabla_w^2 \phi(X^{(i)}w, y^{(i)}) \|_{2\to 2} = \| X^\top \nabla_s^2 \phi(X^{(i)}w, y^{(i)}) X \|_{2 \to 2}
\]
}
where we used the chain rule to express $\nabla_w^2$ in terms of $\nabla_s^2$. Note that, for OGD, we need smoothness in $w$ w.r.t. $\|\cdot\|_2$
which is why the matrix norm above is the operator norm corresponding to the pair $\|\cdot\|_2,\|\cdot\|_2$. In fact, when we say ``operator norm"
without mentioning the pair of norms involved, it is this norm that is usually meant. It is well known that this norm is equal to the largest singular value 
of the matrix. But, just as before, we can bound this in terms of the smoothness  constant of $\phi$ w.r.t. $\|\cdot\|_\infty$ (see Section~\ref{app:smoothness} in the appendix):
\begin{align*}
&\quad \| (X^{(i)})^\top \nabla_s^2 \phi(X^{(i)}w, y^{(i)}) X^{(i)} \|_{2 \to 2} \\
&\le R_X^2 \| \nabla_s^2 \phi(X^{(i)}w, y^{(i)}) \|_{\infty \to 1} .
\end{align*}

where we used Lemma~\ref{lem:normexpr} once again. 
This result using online regret bounds is great for building intuition but suffers from the two defects we mentioned at the end of Section~\ref{sec:online}. 
In the smoothness case, it additionally suffers from a more serious defect: the correct choice of the learning rate $\eta$
requires knowledge of $L_\phi(w^\star)$
which is seldom available.

\subsection{Generalization error bounds under smoothness}
\label{sec:gensmooth}

Once again, to prove a general result for possibly non-convex smooth losses, we will adopt an approach based on covering numbers.
To begin, we will need a useful lemma from \citet[Lemma A.1 in the Supplementary Material]{srebro2010smoothness}. Note that,
for functions over real valued predictions, we do not need to talk about the norm when dealing with smoothness since essentially the only norm available is the
absolute value.

\begin{lemma}
\label{lem:scalarsmooth}
For any $h$-smooth non-negative function $f: \reals \to \reals_+$ and any $t,r \in \reals$	we have
\[
(f(t) - f(r))^2 \le 6 h (f(t) + f(r)) (t-r)^2 .
\]
\end{lemma}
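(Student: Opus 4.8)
The plan is to reduce the claimed inequality to a one-variable quadratic inequality after exploiting the \emph{self-bounding} property of non-negative smooth functions. Since both sides are symmetric under exchanging $t$ and $r$, I would first assume without loss of generality that $f(t) \ge f(r)$, and abbreviate $a := f(r)$, $b := f(t)$ (so $b \ge a \ge 0$) together with $\delta := |t-r|$.

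The first ingredient is the self-bounding bound $(f'(x))^2 \le 2h\, f(x)$ for every $x$. This is essentially the property already invoked in the online analysis (stated there as $\|\nabla f\|_2 \le \sqrt{4Hf}$), and it follows in one line: $h$-smoothness gives the quadratic upper bound $f(y) \le f(x) + f'(x)(y-x) + \tfrac{h}{2}(y-x)^2$ for all $y$, and minimizing the right-hand side over $y$ while using $f \ge 0$ forces $f(x) - (f'(x))^2/(2h) \ge 0$. The second, and crucial, ingredient is to anchor the smoothness (descent) inequality at the endpoint with the \emph{smaller} function value, namely $r$: this yields $b - a = f(t) - f(r) \le f'(r)(t-r) + \tfrac{h}{2}(t-r)^2 \le \sqrt{2ha}\,\delta + \tfrac{h}{2}\delta^2$, where I bound $f'(r)(t-r) \le |f'(r)|\,\delta$ and then apply self-bounding at $r$. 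The point of anchoring at the lower value is that $\sqrt{2ha}$ stays small; anchoring at $t$ instead (or integrating $(f')^2$ via Cauchy--Schwarz) leaves uncontrolled higher powers of $\delta$, which is exactly where a naive attempt breaks down for large $\delta$.

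Finally I would translate the goal into a quadratic. Writing $u := b - a \ge 0$ and noting $a + b = 2a + u$, the target $(b-a)^2 \le 6h(a+b)\delta^2$ is equivalent to $u^2 - 6h\delta^2\, u - 12ha\delta^2 \le 0$, which holds precisely for $u$ lying between the two roots. The positive root is $u_+ = 3h\delta^2 + \sqrt{9h^2\delta^4 + 12ha\delta^2}$ and the other root is nonpositive, so since $u \ge 0$ it suffices to verify $u \le u_+$. Using the bound $u \le \sqrt{2ha}\,\delta + \tfrac{h}{2}\delta^2$ from the previous step, I would conclude by a termwise comparison: $\tfrac{h}{2}\delta^2 \le 3h\delta^2$ and $\sqrt{2ha}\,\delta = \sqrt{2ha\delta^2} \le \sqrt{9h^2\delta^4 + 12ha\delta^2}$, which add up to $\sqrt{2ha}\,\delta + \tfrac{h}{2}\delta^2 \le u_+$, establishing the claim (the case $\delta = 0$ being trivial).

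The only genuine obstacle is the decision made in the second paragraph: recognizing that the Taylor/descent expansion must be based at the endpoint with the smaller function value, so that the small $\sqrt{2ha}$ factor lets the quadratic on the right-hand side absorb all the higher-order terms in $\delta$. Everything after that is elementary quadratic algebra.
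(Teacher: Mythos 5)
Your proof is correct. The self-bounding step $(f'(x))^2 \le 2h\,f(x)$ follows exactly as you say from minimizing the smoothness upper bound and using non-negativity; anchoring the expansion at the endpoint with the smaller value gives $f(t)-f(r) \le \sqrt{2h f(r)}\,\delta + \tfrac{h}{2}\delta^2$ with $\delta = |t-r|$; and the reduction of the target to $u^2 - 6h\delta^2 u - 12 h f(r)\delta^2 \le 0$ with $u = f(t)-f(r) \ge 0$, followed by the termwise comparison $\sqrt{2hf(r)}\,\delta \le \sqrt{9h^2\delta^4 + 12hf(r)\delta^2}$ and $\tfrac{h}{2}\delta^2 \le 3h\delta^2$ against the positive root, is airtight (the negative root is indeed nonpositive, so only the upper root matters). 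The comparison with the paper is unusual here: the paper contains \emph{no proof of this lemma at all} --- it is imported verbatim from \citet[Lemma A.1 in the Supplementary Material]{srebro2010smoothness}, and the paper only proves the vector-valued extension (Lemma~\ref{lem:vectorsmooth}) by restricting $\phi$ to the segment joining $s_1$ and $s_2$ and invoking this scalar statement as a black box. So your argument is necessarily a different route, and what it buys is self-containedness: a two-ingredient elementary proof (self-bounding plus quadratic algebra), where your self-bounding constant $2h$ is even slightly sharper than the $\|\nabla f\|_2 \le \sqrt{4Hf}$ form the paper quotes from Srebro et al.\ in Section~\ref{sec:onlinesmooth}. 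Your diagnosis of the one non-obvious choice --- that the Taylor expansion must be anchored at the endpoint with the \emph{smaller} value, since anchoring at the larger one produces a $\sqrt{2h f(t)}\,\delta$ term that the right-hand side cannot absorb termwise --- is also the correct account of where a naive attempt stalls.
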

We first provide an extension of this lemma to the vector case.
\begin{lemma}
\label{lem:vectorsmooth}
If $\phi: \reals^m \to \reals_+$ is a non-negative function with smoothness constant $H_\phi$ w.r.t. a norm $\lipl \cdot \lipr$ then
for any $s_1,s_2  \in \reals^m$ we have
\[
(\phi(s_1) - \phi(s_2))^2 \le 6 H_\phi \cdot (\phi(s_1) + \phi(s_2)) \cdot \lipl s_1 - s_2 \lipr^2 .
\]
\end{lemma}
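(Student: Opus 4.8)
The plan is to reduce the vector inequality to the scalar Lemma~\ref{lem:scalarsmooth} by restricting $\phi$ to the line through $s_1$ and $s_2$. Set $u := s_1 - s_2$ and define $g : \reals \to \reals_+$ by $g(\tau) := \phi(s_2 + \tau u)$. Since $\phi \ge 0$ everywhere, $g$ is a non-negative function on all of $\reals$, and by construction $g(0) = \phi(s_2)$ and $g(1) = \phi(s_1)$. If I can show that $g$ is scalar-smooth with constant $h := H_\phi \lipl u \lipr^2$, then applying Lemma~\ref{lem:scalarsmooth} with $t = 1$, $r = 0$ gives
\[
(\phi(s_1) - \phi(s_2))^2 = (g(1) - g(0))^2 \le 6h\,(g(1) + g(0)) = 6 H_\phi \lipl s_1 - s_2 \lipr^2 \bigl(\phi(s_1) + \phi(s_2)\bigr),
\]
which is exactly the claim.

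The heart of the argument is thus the smoothness estimate for $g$. Writing $a_i := \nabla_s \phi(s_2 + \tau_i u)$ and using the chain rule $g'(\tau) = \inner{\nabla_s \phi(s_2 + \tau u), u}$, I would bound, for all $\tau_1, \tau_2 \in \reals$,
\[
|g'(\tau_1) - g'(\tau_2)| = \bigl| \inner{a_1 - a_2,\, u} \bigr| \le \lipl a_1 - a_2 \lipr_\star \lipl u \lipr \le H_\phi |\tau_1 - \tau_2| \lipl u \lipr^2 .
\]
Here the first inequality is the generalized Cauchy--Schwarz inequality pairing $\lipl \cdot \lipr$ with its dual $\lipl \cdot \lipr_\star$, and the second uses the smoothness hypothesis on $\phi$, namely $\lipl a_1 - a_2 \lipr_\star \le H_\phi \lipl (\tau_1 - \tau_2) u \lipr = H_\phi |\tau_1 - \tau_2| \lipl u \lipr$. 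This shows $g$ is $h$-smooth with $h = H_\phi \lipl u \lipr^2$, completing the reduction.

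The step deserving the most care---and the real content of the reduction---is verifying that the one-dimensional restriction inherits a smoothness constant scaled by exactly $\lipl s_1 - s_2 \lipr^2$ rather than by the first power of the norm. This squared factor appears because $\lipl u \lipr$ enters once through the dual-norm pairing of the gradient difference with the direction $u$, and a second time through the norm of the argument displacement $(\tau_1 - \tau_2)u$ inside the smoothness hypothesis for $\phi$. It is precisely this squared norm, combined with the trivial $(t - r)^2 = 1$ coming from the choice $t=1$, $r=0$ in the scalar lemma, that produces the $\lipl s_1 - s_2 \lipr^2$ on the right-hand side. If one wishes to avoid any differentiability assumption beyond what is implicit in the gradient-based definition of $H_\phi$, the identical computation goes through with $g'$ read as the directional derivative of $\phi$ along $u$.
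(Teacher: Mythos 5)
Your proof is correct and follows essentially the same route as the paper's: both restrict $\phi$ to the line segment between $s_1$ and $s_2$, use the chain rule and the dual-norm pairing to show the scalar restriction is smooth with constant $H_\phi \lipl s_1 - s_2 \lipr^2$, and then invoke Lemma~\ref{lem:scalarsmooth} at the endpoints. The only difference is the (immaterial) choice of parametrization direction along the segment.
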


Using the basic idea behind local Rademacher complexity analysis, we define the following loss class:
\[
\F_{\phi,2}(r) := \{ (X,y) \mapsto \phi(Xw,y) \::\: \|w\|_2 \le W_2, \hat{L}_\phi(w) \le r \} .
\]
Note that this is a random subclass of functions since $\hat{L}_\phi(w)$ is a random variable.

\begin{proposition}
\label{prop:N2cover}
Let $\phi$ be smooth w.r.t. $\|\cdot\|_\infty$ with constant $H_\phi$.
The covering numbers of $\F_{\phi,2}(r)$ in the $d_{2}^{Z^{(1:n)}}$ metric defined above are bounded as follows:
{\small
\[
\log_2 \N_2(\epsilon,\F_{\phi,2}(r),Z^{(1:n)}) \le \left\lceil \frac{12H_\phi \, W_2^2 \, R_X^2 \,r}{\epsilon^2} \right\rceil \log_2 (2mn+1) .
\]
}
\end{proposition}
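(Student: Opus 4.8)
The plan is to reuse the architecture of the proof of Proposition~\ref{prop:Ninftycover}, but to replace the Lipschitz contraction step by the second-order (self-bounding) estimate of Lemma~\ref{lem:vectorsmooth}. The point is that smoothness, unlike Lipschitzness, lets one trade a factor of the \emph{local radius} $r$ against the squared distance, which is exactly what produces the $r$ in the numerator and is ultimately responsible for the ``optimistic'' rates. Concretely, I would again exploit the observation from Proposition~\ref{prop:Ninftycover} that, from the point of view of the scalar linear class $\G_2$, the vectors $(X^{(i)}_j)_{j=1:m}^{i=1:n}$ form a single data set of size $mn$, and reduce the $d_2^{Z^{(1:n)}}$-covering problem for $\F_{\phi,2}(r)$ to an $\ell_\infty$-covering problem for $\G_2$ on that enlarged data set.

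First I would apply Lemma~\ref{lem:vectorsmooth} with $s_1 = X^{(i)}w$, $s_2 = X^{(i)}w'$ and the norm $\|\cdot\|_\infty$, for each $i$, to get
\[
\left(\phi(X^{(i)}w,y^{(i)}) - \phi(X^{(i)}w',y^{(i)})\right)^2 \le 6 H_\phi \left(\phi(X^{(i)}w,y^{(i)}) + \phi(X^{(i)}w',y^{(i)})\right) \|X^{(i)}(w-w')\|_\infty^2 .
\]
Averaging over $i$, pulling the maximum of the (nonnegative) $\ell_\infty$ factor out of the sum, and recognizing that $\tfrac1n\sum_i \phi(X^{(i)}w,y^{(i)}) = \hat{L}_\phi(w)$, this yields, for any $w,w'$ with $\hat{L}_\phi(w),\hat{L}_\phi(w') \le r$,
\[
\left(d_2^{Z^{(1:n)}}(w,w')\right)^2 \le 6 H_\phi \left(\hat{L}_\phi(w) + \hat{L}_\phi(w')\right) \max_{i,j} \left|\inner{X^{(i)}_j, w - w'}\right|^2 \le 12 H_\phi\, r \max_{i,j} \left|\inner{X^{(i)}_j, w - w'}\right|^2 .
\]
In other words, on the local class $\F_{\phi,2}(r)$ the metric $d_2^{Z^{(1:n)}}$ is dominated by $\sqrt{12 H_\phi r}$ times the $\ell_\infty$ metric of $\G_2$ on the size-$mn$ data set.

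Consequently any cover of $\G_2$ in that $\ell_\infty$ metric at scale $\epsilon/\sqrt{12 H_\phi r}$ induces a $d_2^{Z^{(1:n)}}$-cover of $\F_{\phi,2}(r)$ at scale $\epsilon$, so that $\N_2(\epsilon,\F_{\phi,2}(r),Z^{(1:n)}) \le \N_\infty(\epsilon/\sqrt{12 H_\phi r},\G_2,mn)$. Plugging $\epsilon' = \epsilon/\sqrt{12 H_\phi r}$ into the bound of \citet[Corollary 3]{zhang2002covering}, namely $\log_2 \N_\infty(\epsilon',\G_2,mn) \le \lceil W_2^2 R_X^2/\epsilon'^2\rceil \log_2(2mn+1)$, turns $W_2^2 R_X^2/\epsilon'^2$ into $12 H_\phi W_2^2 R_X^2 r/\epsilon^2$ and gives exactly the claimed inequality.

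The step that needs the most care is the bookkeeping around the local constraint: the factor $2r$ above is legitimate only when \emph{both} the covered function and its center lie in $\F_{\phi,2}(r)$, so I would work with a proper cover (equivalently, a maximal separated subset) of the local class rather than an arbitrary cover of the whole ball $\{\|w\|_2\le W_2\}$, and then invoke monotonicity of covering numbers under passing to the subclass in order to still apply Zhang's full-ball estimate. This is the only delicate point; everything else — the pointwise application of Lemma~\ref{lem:vectorsmooth}, the reduction to the $mn$-point data set, and the rescaling of $\epsilon$ — is routine and parallels Proposition~\ref{prop:Ninftycover} exactly.
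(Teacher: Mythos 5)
Your proposal reproduces the paper's proof essentially step for step: the paper likewise applies Lemma~\ref{lem:vectorsmooth} per query, pulls out $\max_i \|X^{(i)}(w-w')\|_\infty$, bounds $\hat{L}_\phi(w)+\hat{L}_\phi(w')\le 2r$, reduces to an $\N_\infty$ cover of $\G_2$ over the pooled $mn$ rows at scale $\epsilon/\sqrt{12H_\phi r}$, and invokes Corollary 3 of Zhang (2002). The one point where you go beyond the paper is your ``delicate point,'' and you are right to flag it: the paper's own proof silently applies the $2r$ bound with cover centers coming from an arbitrary cover of the full ball $\G_2$ (indeed, under Zhang's definition the centers need not even be realized by any $w'$ with $\|w'\|_2\le W_2$, let alone satisfy $\hat{L}_\phi(w')\le r$), so the metric domination $d_2^{Z^{(1:n)}}\le\sqrt{12H_\phi r}\,\max_{i,j}|\inner{X_j^{(i)},w-w'}|$ cannot legitimately be applied to point--center pairs; your insistence on a proper cover of the local class repairs a genuine gap. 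However, the repair is not free, contrary to your claim of recovering ``exactly the claimed inequality'': passing to a proper cover via a maximal separated subset (packing at scale $\epsilon'$ is at most external covering at scale $\epsilon'/2$) costs a factor of $2$ in scale, so Zhang's estimate yields $\left\lceil 48 H_\phi W_2^2 R_X^2 r/\epsilon^2\right\rceil \log_2(2mn+1)$ rather than the stated constant $12$. Since this constant is absorbed into the $\tilde{O}(\cdot)$ of Corollary~\ref{cor:subroot} and Theorem~\ref{thm:smooth}, nothing downstream is affected, but as a literal statement the proposition (and the paper's proof of it) should carry the larger constant under your -- correct -- reading of the covering argument.
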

Control of covering numbers easily gives a control on the Rademacher complexity of the random subclass $\F_{\phi,2}(r)$.
\begin{corollary}
\label{cor:subroot}
Let $\phi$ be smooth w.r.t. $\|\cdot\|_\infty$ with constant $H_{\phi}$ and uniformly bounded by $B$ for $w\in \F_2$. Then the
empirical Rademacher complexity of the class $\F_{\phi,2}(r)$ is bounded as
\[
\radhat{n}{\F_{\phi,2}(r)} \le 4\sqrt{r} C \log \frac{3\sqrt{B}}{C} 
\] 
where $C = 5\sqrt{3}W_2R_X\sqrt{\frac{H_\phi \log_2(3mn)}{n}}$.
\end{corollary}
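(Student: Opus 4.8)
The plan is to read the bound straight off the Srebro--Sridharan refinement of Dudley's entropy integral in~\eqref{eq:dudley1}, fed with the covering number estimate of Proposition~\ref{prop:N2cover}. The essential structural point is that $\F_{\phi,2}(r)$ is a \emph{local} class: every $h = \phi(Xw,y)$ in it obeys $\hat{L}_\phi(w)\le r$, so since $\phi$ is bounded by $B$ we get $\Ehat{h^2} = \tfrac1n\sum_i \phi(X^{(i)}w,y^{(i)})^2 \le B\,\hat{L}_\phi(w)\le Br$, whence $\sup_{h\in\F_{\phi,2}(r)}\sqrt{\Ehat{h^2}}\le \sqrt{Br}$. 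I would therefore use $\sqrt{Br}$ as the upper limit of integration in~\eqref{eq:dudley1} rather than the crude value $B$ from~\eqref{eq:dudley2}. This refined limit is precisely what kills the $r$ inside the eventual logarithm and turns $B$ into $\sqrt{B}$, producing the asserted $\log(3\sqrt{B}/C)$ factor; using the crude limit $B$ would instead leave an $r$-dependent argument of the full size $B$.

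First I would substitute Proposition~\ref{prop:N2cover}, drop the ceiling via $\lceil x\rceil\le x+1$, and replace $\log_2(2mn+1)$ by $\log_2(3mn)$, so that the integrand is controlled, up to a lower-order piece from the ``$+1$'', by
\[
\sqrt{\frac{\log_2 \N_2(\epsilon,\F_{\phi,2}(r),Z^{(1:n)})}{n}}\ \le\ \frac{\sqrt{r}\,C'}{\epsilon}+(\text{lower order}),
\]
where $C'=\sqrt{12\,H_\phi\,W_2^2\,R_X^2\,\log_2(3mn)/n}$, which differs from $C$ only by the fixed factor $C=\tfrac{5}{2}C'$. The crucial feature is that the $1/\epsilon^2$ behaviour of the log-covering number becomes a $1/\epsilon$ integrand, so the entropy integral is logarithmic:
\[
10\int_{\alpha}^{\sqrt{Br}}\frac{\sqrt{r}\,C'}{\epsilon}\,d\epsilon\ =\ 10\,\sqrt{r}\,C'\,\log\frac{\sqrt{Br}}{\alpha}\ =\ 4\,\sqrt{r}\,C\,\log\frac{\sqrt{Br}}{\alpha},
\]
using $10C'=4C$. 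Choosing $\alpha$ proportional to $\sqrt{r}\,C'$ makes $\sqrt{Br}/\alpha$ proportional to $\sqrt{B}/C$ (the $\sqrt{r}$ cancels), collapsing the bound to the stated form $4\sqrt{r}\,C\,\log(3\sqrt{B}/C)$ after the $4\alpha$ term and the ceiling remainder are absorbed.

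I expect no conceptual obstacle here, since the mathematical content lives entirely in Proposition~\ref{prop:N2cover}; the only delicate part is the constant bookkeeping. Specifically, one must verify that the numerical constant $5\sqrt{3}$ baked into $C$ is large enough that the additive $4\alpha$ term and the integrated contribution of the ceiling's ``$+1$'' (of order $(\sqrt{Br}-\alpha)\sqrt{\log_2(3mn)/n}$) are both dominated, so that the final inequality holds with leading constant exactly $4$ and log argument exactly $3\sqrt{B}/C$. Pinning down $\alpha$ precisely and discharging these lower-order terms is the sole step that requires care.
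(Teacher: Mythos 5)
Your proposal is correct and takes essentially the same route as the paper's own proof: both plug Proposition~\ref{prop:N2cover} into the refined entropy integral~\eqref{eq:dudley1} with the localized upper limit $\sqrt{Br}$, reduce the integrand to a $\sqrt{r}\,C'/\epsilon$ form, and choose $\alpha = C\sqrt{r}$ so that the $\sqrt{r}$ cancels inside the logarithm and $1+\log(\sqrt{B}/C)\le\log(3\sqrt{B}/C)$ gives the stated constant. The only cosmetic difference is in handling the ceiling, which the paper absorbs directly into replacing $\log_2(2mn+1)$ by $\log_2(3mn)$ rather than tracking a separate lower-order term as you do.
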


With the above corollary in place we can now prove our second key result.
\begin{theorem}
\label{thm:smooth}
Suppose $\phi$ is smooth w.r.t. $\|\cdot\|_\infty$ with constant $H_\phi$ and is uniformly bounded by $B$ over $\F_2$.
With probability at least $1-\delta$,
\[
\forall w \in \F_2,\ L_\phi(w) \le \hat{L}_\phi(w) + \tilde{O}\left(
\sqrt{ \frac{L_\phi(w) D_0}{n}}  + \frac{D_0}{n} 
\right)
\]
where $D_0 = B\log(1/\delta) + W_2^2 R_X^2 H_\phi$.
Moreover, with probability at least $1-2\delta$,
\[
L_\phi(\hat{w}) \le L_\phi(w^\star) + \tilde{O}\left(
\sqrt{ \frac{L_\phi(w^\star) D_0}{n}}  + \frac{D_0}{n} 
\right)
\]
where $\hat{w}, w^\star$ are minimizers of $\hat{L}_\phi(w)$ and $L_\phi(w)$ respectively (over $w \in \F_2$).
\end{theorem}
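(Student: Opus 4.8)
The plan is to run the local Rademacher complexity (LRC) machinery, using Corollary~\ref{cor:subroot} as the single structural input. The decisive feature of that corollary is that the empirical Rademacher complexity of the \emph{empirically localized} loss class $\F_{\phi,2}(r)$ grows only like $\sqrt{r}$: writing $\psi(r) = 4\sqrt{r}\,C\log\frac{3\sqrt{B}}{C}$ with $C = 5\sqrt{3}\,W_2 R_X\sqrt{H_\phi \log_2(3mn)/n}$, the map $\psi$ is sub-root (indeed $\psi(r)/\sqrt{r}$ is constant in $r$), and this is exactly the ingredient that upgrades a slow $1/\sqrt{n}$ analysis to an optimistic rate. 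I would also record at the outset the variance-to-mean relation that makes smoothness pay off here: since $0 \le \phi \le B$ pointwise, $\phi^2 \le B\phi$, hence $\E{\phi(Xw,y)^2} \le B\,\E{\phi(Xw,y)} = B\,L_\phi(w)$, so the second moment of each loss function is controlled by $B$ times its mean.

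First I would compute the sub-root fixed point. Solving $r^\star = \psi(r^\star)$ gives $r^\star = 16\,C^2 \log^2\frac{3\sqrt{B}}{C} = \tilde{O}(W_2^2 R_X^2 H_\phi / n)$, which is precisely the quantity that will surface as the $W_2^2 R_X^2 H_\phi$ contribution inside $D_0/n$.

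Next I would apply a standard LRC generalization theorem (of Bartlett--Bousquet--Mendelson type) to the loss class $\phi\circ\F_2$. Feeding in the sub-root bound $\psi$ together with the variance-to-mean relation yields, uniformly over $w \in \F_2$ and with probability at least $1-\delta$, a relative-deviation inequality of the shape $L_\phi(w) \le \hat{L}_\phi(w) + c_1\sqrt{L_\phi(w)\,r^\star} + c_2\,B\log(1/\delta)/n + c_3\,r^\star$; collecting $r^\star$ and the $B\log(1/\delta)/n$ term into $D_0 = B\log(1/\delta) + W_2^2 R_X^2 H_\phi$ gives the first displayed bound. For the ``moreover'' statement, I would instantiate this uniform bound at $w = \hat{w}$, invoke the ERM optimality $\hat{L}_\phi(\hat{w}) \le \hat{L}_\phi(w^\star)$, upper bound $\hat{L}_\phi(w^\star)$ by $L_\phi(w^\star)$ up to a one-sided Bernstein deviation of order $\sqrt{L_\phi(w^\star)B\log(1/\delta)/n} + B\log(1/\delta)/n$ (which consumes the second $\delta$), and finally solve the resulting quadratic inequality in $\sqrt{L_\phi(\hat{w})}$.

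The main obstacle is reconciling the type of localization. Corollary~\ref{cor:subroot} localizes by the \emph{empirical} loss $\hat{L}_\phi(w) \le r$, whereas the textbook LRC theorems localize by a population quantity such as $\E{h^2} \le r$. Bridging this gap requires either an LRC theorem stated directly for empirical localization, or a peeling argument over dyadic bands of $\hat{L}_\phi$ combined with the variance-to-mean relation and a uniform one-sided concentration that transfers between $\hat{L}_\phi$ and $L_\phi$ within each band. Carrying out that translation cleanly, and then inverting the self-referential inequality $L_\phi \lesssim \hat{L}_\phi + \sqrt{L_\phi\, r^\star} + \cdots$ to isolate $L_\phi$, is where the real effort goes; everything else is constant bookkeeping, which is why the statement is phrased only up to $\tilde{O}(\cdot)$.
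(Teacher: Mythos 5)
Your proposal follows essentially the same route as the paper's proof: the same sub-root function $\psi_n(r) = 4\sqrt{r}\,C\log\frac{3\sqrt{B}}{C}$ from Corollary~\ref{cor:subroot}, the same fixed point $r_n^\star = \left(4C\log\frac{3\sqrt{B}}{C}\right)^2 = \tilde{O}(W_2^2R_X^2H_\phi/n)$, and the same second-part argument (instantiate at $\hat{w}$, use $\hat{L}_\phi(\hat{w}) \le \hat{L}_\phi(w^\star)$, apply Bernstein with the variance bound $\mathrm{Var}[\phi(Xw^\star,y)] \le B\,L_\phi(w^\star)$, then invert the self-referential inequality). The obstacle you flag as the main open point---empirical versus population localization---is resolved in the paper by exactly the first of your two suggested bridges: Theorem 6.1 of \citet{bousquet2002concentration} is stated directly for random subclasses localized by empirical loss and immediately yields $L_\phi(w) \le \hat{L}_\phi(w) + 45 r_n^\star + \sqrt{8 r_n^\star L_\phi(w)} + \sqrt{4 r_0 L_\phi(w)} + 20 r_0$ uniformly over $\F_2$, with $r_0 = B(\log(1/\delta)+\log\log n)/n$, so no peeling argument is needed.
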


\section{Conclusion}
\label{sec:conclusion}
We showed that it is not necessary for generalization error bounds for query-level learning to rank algorithms to 
deteriorate with increasing length of document lists associated with queries. The key idea behind our improved bounds
was defining Lipschitz constants w.r.t. $\ell_\infty$ norm instead of the ``standard" $\ell_2$ norm. As a result, we were able to derive much tighter guarantees
for popular loss functions such as ListNet and Smoothed DCG@1 than previously available.

Our generalization analysis of learning to rank algorithms paves the way for further interesting work. One possibility is
to use these bounds to design active learning algorithms for learning to rank with formal label complexity guarantees. Another interesting 
possibility is to consider other problems, such as multi-label learning, where functions with vector-valued outputs are learned by optimizing
a joint function of those outputs.

\section*{Acknowledgement}
We gratefully acknowledge the support of NSF under grant IIS-1319810.
Thanks to Prateek Jain for discussions that led us to Theorem 3.


\bibliography{gen_bounds_bib}
\bibliographystyle{icml2015}

\onecolumn 
\newpage

\appendix

\section{Proof of Proposition~\ref{prop:listnet}}
\label{app:listnet}
\begin{proof}
Let $e_j$'s denote standard basis vectors. We have
\[
\nabla_s \listnet(s,y) = -\sum_{j=1}^m P_j(y) e_j + \sum_{j=1}^m  \frac{\exp(s_j)}{\sum_{j'=1}^m \exp(s_{j'})} e_j
\]
Therefore,
\begin{align*}
\| \nabla_s \listnet(s,y) \|_1 &\le \sum_{j=1}^m P_j(y) \|e_j\|_1 + \sum_{j=1}^m  \frac{\exp(s_j)}{\sum_{j'=1}^m \exp(s_{j'})} \| e_j \|_1 \\
& =2 .
\end{align*}
We also have
\begin{equation*}
[\nabla^2_s \listnet(s,y)]_{j,k} = 
\begin{cases}
- \frac{\exp(2s_j)}{(\sum_{j'=1}^m \exp(s_{j'}) )^2} + \frac{\exp(s_j)}{\sum_{j'=1}^m \exp(s_{j'})} & \text{if } j=k \\
- \frac{\exp(s_j+s_k)}{(\sum_{j'=1}^m \exp(s_{j'}) )^2} & \text{if } j \neq k\ .
\end{cases}
\end{equation*}
Moreover,
\begin{align*}
\| \nabla^2_s \listnet(s,y) \|_{\infty \to 1} &\le \sum_{j=1}^m \sum_{k=1}^m | [\nabla^2_s \listnet(s,y)]_{j,k} |\\
&\le \sum_{j=1}^m \sum_{k = 1}^m \frac{\exp(s_j+s_k)}{(\sum_{j'=1}^m \exp(s_{j'}) )^2} 
+ \sum_{j=1}^m \frac{\exp(s_j)}{\sum_{j'=1}^m \exp(s_{j'})} \\
&= \frac{(\sum_{j=1}^m \exp(s_{j}) )^2}{(\sum_{j'=1}^m \exp(s_{j'}) )^2} + \frac{\sum_{j=1}^m \exp(s_{j})}{\sum_{j'=1}^m \exp(s_{j'})} \\
&= 2
\end{align*}
\end{proof}

\section{Proof of Proposition~\ref{prop:sdcg1}}
\label{app:sdcg1}
\begin{proof}
Let $1_{(\text{condition})}$ denote an indicator variable. We have
\[
[\nabla_s \sdcg(s,y)]_j = D(1) \left( \sum_{i=1}^m G(r_i) \left[ \frac{1}{\sigma} \frac{\exp(s_i/\sigma)}{\sum_{j'} \exp(s_{j'}/\sigma)} 1_{(i = j)} 
- \frac{1}{\sigma}\frac{\exp((s_i+s_j)/\sigma)}{(\sum_{j'} \exp(s_{j'}/\sigma))^2}  \right] \right)
\]
Therefore,
\begin{align*}
\frac{\| \nabla_s \sdcg(s,y) \|_1}{D(1)G(Y_{\max})} &\le \sum_{j=1}^m \left(
\sum_{i=1}^m \left[ \frac{1}{\sigma} \frac{\exp(s_i/\sigma)}{\sum_{j'} \exp(s_{j'}/\sigma)} 1_{(i = j)} 
+ \frac{1}{\sigma}\frac{\exp((s_i+s_j)/\sigma)}{(\sum_{j'} \exp(s_{j'}/\sigma))^2}  \right] 
\right) \\
& = \frac{1}{\sigma} \left( \frac{\sum_{j} \exp(s_{j}/\sigma)}{\sum_{j'} \exp(s_{j'}/\sigma)} + \frac{(\sum_{j} \exp(s_{j}/\sigma))^2}{(\sum_{j'} \exp(s_{j'}/\sigma))^2} \right) \\
& = \frac{2}{\sigma} .
\end{align*}
\end{proof}

\section{RankSVM}

The RankSVM surrogate is defined as:

\begin{equation*}
\phi_{RS}(s,y)=\sum_{i=1}^m \sum_{j=1}^m \max(0,1_{(y_i>y_j)}(1+ s_j- s_i)) 
\end{equation*}

It is easy to see that $\nabla_s \phi_{RS} (s,y)=  \sum_{i=1} ^ {m} \sum_{j=1}^m \max (0,1_{(y_i>y_j)}(1 + s_j - s_i)) (e_j - e_i)$. Thus, the $\ell_1$ norm of gradient  is $O(m^2)$ .

\section{Proof of Theorem~\ref{thm:dimension}}
\begin{proof}
It is straightforward to check that $\Flin'$ is contained in both $\Ffull$ as well as $\Fperm$. So, we just need to prove that any $f$ that is in both $\Ffull$ and $\Fperm$ has to be in $\Flin'$ as well.

Let $P_\pi$ denote the $m \times m$ permutation matrix corresponding to a permutation $\pi$. Consider the full linear class $\Ffull$. In matrix notation, the permutation invariance property means that, for any $\pi, X$, we have $P_{\pi}[\inner{X,W_1}, \ldots, \inner{X,W_m}\rangle]^\top= [\inner{P_\pi X,W_1}, \ldots, \inner{P_\pi X,W_m}]^\top$.

Let $\rho_{1}= \{P_\pi: \pi(1)= 1 \}$, where $\pi(i)$ denotes the index of the element in the $i$th position according to permutation $\pi$. Fix any $P \in \rho_1$. Then, for any $X$, $\inner{X,W_1} = \inner{P X,W_1}$. This implies that, for all $X$,
$\tr({W_1}^{\top}X)= \tr({W_1}^{\top}PX)$. Using the fact that $\tr(A^\top X)=\tr(B^\top X), \forall X$ implies $A=B$, we have that ${W_1}^{\top}= {W_1}^{\top}P$. Because $P^\top = P^{-1}$, this means $PW_1 = W_1$. This shows that all rows of $W_1$, other than 1st row, are the same but perhaps different from 1st row. By considering $\rho_{i} = \{ P_\pi:\pi(i) = i \}$ for $i > 1$, the same reasoning shows that, for each $i$, all rows of $W_i$, other than $i$th row, are the same but possibly different from $i$th row.

Let $\rho_{1\leftrightarrow 2}= \{P_\pi: \pi(1)= 2, \pi(2)=1 \}$. Fix any $P \in \rho_{1\leftrightarrow 2}$. Then, for any $X$, $\inner{X,W_2} = \inner{PX,W_1}$ and $\inner{X,W_1} = \inner{PX,W_2}$. Thus, we have $W_2^{\top} = W_1^{\top}P$
as well as $W_1^\top = W_2^\top P$ which means $PW_2 = W_1, PW_1 = W_2$. This shows that row 1 of $W_1$ and row 2 of $W_2$ are the same. Moreover, row 2 of $W_1$ and row 1 of $W_2$ are the same.
Thus, for some $u,u' \in \reals^d$, $W_1$ is of the form $[ u | u' | u'  | \ldots | u' ]^\top$ and $W_2$ is of the form $[u' | u | u' | \ldots | u']^\top$. Repeating this argument by considering $\rho_{1 \leftrightarrow i}$ for $i > 2$ shows that
$W_i$ is of the same form ($u$ in row $i$ and $u'$ elsewhere).

Therefore, we have proved that any linear map that is permutation invariant has to be of the form:
\[
X \mapsto \left( u^\top X_i + (u')^\top \sum_{j\neq i} X_j \right)_{i=1}^m .
\]
We can reparameterize above using $w = u-u'$ and $v = u'$ which proves the result.
\end{proof}

\section{Proof of Lemma~\ref{lem:normexpr}}
\begin{proof}
The first equality is true because
\begin{align*}
\| X^\top \|_{1 \to p} &= \sup_{v \neq 0} \frac{ \| X^\top v \|_p }{\| v \|_1} 
= \sup_{v \neq 0} \sup_{u \neq 0} \frac{ \inner{X^\top v, u} }{\| v \|_1 \|u\|_q} \\
&= \sup_{u \neq 0} \sup_{v \neq 0} \frac{ \inner{v, X u} }{\| v \|_1 \|u\|_q} 
= \sup_{u \neq 0} \frac{ \| X u\|_\infty }{\| u \|_q} 
= \| X \|_{q \to \infty} .
\end{align*}
The second is true because
\begin{align*}
\| X \|_{q \to \infty} &= \sup_{u \neq 0} \frac{ \| X u \|_\infty }{\|u\|_q }
= \sup_{u \neq 0} \max_{j=1}^m \frac{ |\inner{X_j,u}| }{\|u\|_q} \\
&= \max_{j=1}^m \sup_{u \neq 0} \frac{ |\inner{X_j,u}| }{\|u\|_q}
= \max_{j=1}^m \| X_j \|_p . 
\end{align*} 
\end{proof}

\section{Proof of Theorem~\ref{thm:sco}}
Our theorem is developed from the ``expectation version" of Theorem 6 of \citet{shalev2009} that was originally given in probabilistic form. The expected version is as follows.

Let $\Z$ be a space endowed with a probability distribution generating iid draws $Z_1,\ldots,Z_n$. Let $\W \subseteq \reals^d$ and  $f:\W \times \Z \to \reals$ be $\lambda$-strongly convex\footnote{Recall that
a function is called $\lambda$-strongly convex (w.r.t. $\|\cdot\|_2$) iff $f-\tfrac{\lambda}{2}\|\cdot\|_2^2$ is convex.}
and $G$-Lipschitz (w.r.t. $\|\cdot\|_2$) in $w$ for every $z$. We define $F(w)= \E{f(w,Z)}$ and let
\begin{align*}
w^\star &= \argmin_{w \in \W}\ F(w) ,\\
\hat{w} &= \argmin_{w \in \W}\ \frac{1} {n}\sum_{i=1}^nf(w,Z_i) .
\end{align*}
Then $\E{F(\hat{w}) - \ F(w^\star)} \le \frac{4G^2}{\lambda n}$, where the expectation is taken over the sample. The above inequality can be proved by carefully going through the proof of Theorem 6 proved by~\citet{shalev2009}. 

We now derive the ``expectation version" of Theorem 7 of \citet{shalev2009}. Define the regularized empirical risk minimizer as follows:
\begin{equation}
\label{optimumw}
\hat{w}_{\lambda}= \argmin_{w \in \W}\ \frac{\lambda}{2}\|w\|^2_2 + \frac{1} {n}\sum_{i=1}^nf(w,Z_i) .
\end{equation}
The following result gives optimality guarantees for the regularized empirical risk minimizer.
\begin{theorem}
\label{expectedtheorem}
Let $\W = \{w\::\: \|w\|_2 \le W_2 \}$ and let $f(w, z)$ be convex and $G$-Lipschitz (w.r.t. $\|\cdot\|_2$) in $w$ for every $z$. Let $Z_1,...,Z_n$ be iid samples and let $\lambda= \sqrt{\frac{\frac{4G^2}{n}}{\frac{W_2^2}{2}+\frac{4W_2^2}{n}}}$. Then for $\hat{w}_{\lambda}$ and $w^\star$ as defined above, we have 
\begin{equation}
\E{ F(\hat{w}_{\lambda}) -  F(w^\star)} \le 2\,G\,W_2\left(\frac{8}{n} + \sqrt{\frac{2}{n}}\right) .
\end{equation}
\end{theorem}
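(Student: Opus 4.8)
The plan is to reduce the claim to the expectation version of Theorem 6 stated just above, applied not to $f$ but to the \emph{regularized} per-example objective. First I would define, for each $z$, the function $r(w,z) := \frac{\lambda}{2}\|w\|_2^2 + f(w,z)$. Since $\frac{\lambda}{2}\|\cdot\|_2^2$ is $\lambda$-strongly convex and $f(\cdot,z)$ is convex, $r(\cdot,z)$ is $\lambda$-strongly convex; and on the domain $\W = \{w : \|w\|_2 \le W_2\}$ its gradient $\lambda w + \nabla_w f(w,z)$ has $\ell_2$ norm at most $\lambda W_2 + G$, so $r(\cdot,z)$ is $(G+\lambda W_2)$-Lipschitz w.r.t. $\|\cdot\|_2$ over $\W$.

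Next I would set $R(w) := \E{r(w,Z)} = \frac{\lambda}{2}\|w\|_2^2 + F(w)$ and observe that the regularized empirical risk minimizer $\hat{w}_\lambda$ of Eq.~\eqref{optimumw} is exactly the (unregularized) ERM of $r$, whereas its population counterpart is $w^\star_\lambda := \argmin_{w\in\W} R(w)$. Applying the expectation version of Theorem 6 to $r$ (strong-convexity parameter $\lambda$, Lipschitz constant $G+\lambda W_2$) then gives
\[
\E{R(\hat{w}_\lambda) - R(w^\star_\lambda)} \le \frac{4(G+\lambda W_2)^2}{\lambda n} .
\]

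The third step translates this back into a bound on $F$. Because the regularizer is nonnegative, $F(\hat{w}_\lambda) \le R(\hat{w}_\lambda)$; and because $w^\star_\lambda$ minimizes $R$ over $\W$ while $\|w^\star\|_2 \le W_2$, we have $R(w^\star_\lambda) \le R(w^\star) \le F(w^\star) + \frac{\lambda}{2}W_2^2$. Combining these with the displayed inequality yields
\[
\E{F(\hat{w}_\lambda)} - F(w^\star) \le \frac{4(G+\lambda W_2)^2}{\lambda n} + \frac{\lambda}{2}W_2^2 .
\]
Expanding the square and grouping the $\lambda$-dependent terms rewrites the right-hand side as $\frac{4G^2}{\lambda n} + \lambda W_2^2\left(\frac{4}{n}+\frac{1}{2}\right) + \frac{8GW_2}{n}$. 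The $\lambda$ prescribed in the statement is exactly the minimizer of the first two terms, and at that value they are equal, contributing $\frac{4GW_2}{\sqrt{n}}\sqrt{\frac{4}{n}+\frac{1}{2}}$. Finally I would use $\sqrt{a+b}\le\sqrt{a}+\sqrt{b}$ to bound $\sqrt{\frac{4}{n}+\frac{1}{2}} \le \frac{2}{\sqrt{n}}+\frac{1}{\sqrt{2}}$ and collect terms, which reproduces exactly $2GW_2\left(\frac{8}{n}+\sqrt{\frac{2}{n}}\right)$.

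The only genuinely delicate point is the bookkeeping around the Lipschitz constant: regularizing inflates it from $G$ to $G+\lambda W_2$, so the cross term in the expanded square produces the $m$-free $\frac{8GW_2}{n}$ contribution that becomes the $O(1/n)$ part of the bound. Everything else is the standard strongly-convex-regularization reduction together with an elementary one-dimensional optimization over $\lambda$, so I expect no serious obstacle beyond verifying that the stated $\lambda$ balances the two dominant terms and that the $\sqrt{a+b}$ relaxation yields precisely the advertised constants.
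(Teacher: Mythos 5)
Your proposal is correct and follows essentially the same route as the paper's own proof: regularize the per-example loss to make it $\lambda$-strongly convex with Lipschitz constant $G+\lambda W_2$, invoke the expectation version of Theorem 6 of \citet{shalev2009}, compare the regularized population optimum to $w^\star$, and then optimize $\lambda$ and apply $\sqrt{a+b}\le\sqrt{a}+\sqrt{b}$ to obtain the stated constants. Your accounting of the cross term $\frac{8GW_2}{n}$ and the balanced terms reproduces the bound exactly, so there is nothing to add.
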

\begin{proof}
Let $r_{\lambda}(w,z)= \frac{\lambda}{2}\|w\|^2_2 + f(w,z)$. Then $r_{\lambda}$ is $\lambda$-strongly convex with Lipschitz constant $\lambda W_2 + G$ in $\|\cdot\|_2$. Applying ``expectation version" of Theorem 6 of~\citet{shalev2009} to $r_{\lambda}$, we get
\[
\E{ \frac{\lambda}{2}\|\hat{w}_{\lambda}\|^2_2 + F(\hat{w}_{\lambda}) }
\le \min_{w \in \W}\ \left\{ \frac{\lambda}{2}\|w\|^2_2 +  F(w) \right\} + \frac{4(\lambda W_2 + G)^2}{\lambda n} 
\le \frac{\lambda}{2}\|w^\star\|^2_2 +  F(w^*) + \frac{4(\lambda W_2 + G)^2}{\lambda n} .
\]
Thus, we get
\[
\E{ F(\hat{w}_{\lambda}) - F(w^\star) } \le \frac{\lambda W_2^2}{2} +\frac{4(\lambda W_2 + G)^2}{\lambda n} \ .
\]
Minimizing the upper bound w.r.t. $\lambda$, we get $\lambda= \sqrt{\frac{4G^2}{n}}\sqrt{\frac{1}{\frac{W_2^2}{2} + \frac{4W_2^2}{n}}}$. Plugging this choice back in the equation above and using the fact that $\sqrt{a+b} \le \sqrt{a} + \sqrt{b}$ finishes
the proof of Theorem \ref{expectedtheorem}.
\end{proof}

We now have all ingredients to prove Theorem~\ref{thm:sco}.
\begin{proof}[Proof of Theorem~\ref{thm:sco}]
Let $\Z = \X \times \Y$ and $f(w,z)=\phi(Xw,y)$ and apply Theorem~\ref{expectedtheorem}. Finally note that if $\phi$ is $G_\phi$-Lipschitz w.r.t. $\|\cdot\|_\infty$ and every row of $X \in \reals^{m \times d}$ has Euclidean norm bounded by $R_X$
then $f(\cdot,z)$ is $G_\phi R_X$-Lipschitz w.r.t. $\|\cdot\|_2$ in $w$.
\end{proof}

\section{Proof of Theorem~\ref{thm:lip1}}
\begin{proof}
Following exactly the same line of reasoning (reducing a sample of size $n$, where each prediction is $\reals^m$-valued,
to an sample of size $mn$, where each prediction is real valued) as in the beginning of proof of Proposition~\ref{prop:Ninftycover}, we have
\begin{equation}
\label{eq:coveringofFfromG1}
\N_\infty(\epsilon,\phi \circ \F_1,n) \le \N_\infty(\epsilon/G_\phi,\G_1,mn) .
\end{equation}
Plugging in the following bound due to \citet[Corollary 5]{zhang2002covering}:
\begin{align*}
\log_2 \N_\infty(\epsilon/G_\phi,\G_1,mn) &\le \left\lceil \frac{288\,G_\phi^2 \, W_1^2 \, \bar{R}_X^2\,(2+\ln d)}{\epsilon^2} \right\rceil  \\
&\times \log_2 \left( 2 \lceil 8 G_\phi W_1 \bar{R}_X/\epsilon \rceil mn+1 \right)
\end{align*}
into~\eqref{eq:coveringofFfromG1} respectively proves the result.
\end{proof}

\section{Calculations involved in deriving Equation~\eqref{eq:onlinefastrate}}
\label{app:etacalc}

Plugging in the value of $\eta$ from~\eqref{eq:etaval} into the expression
\[
\frac{L_\phi(w^\star)}{(1-4\eta H)} + \frac{W_2^2}{2\eta(1-4\eta H)n}
\]
yields (using the shorthand $L^\star$ for $L_\phi(w^\star)$)
\[
L^\star + \frac{2HW_2 L^\star}{\sqrt{4H^2W_2^2 + 2H L^\star n}} + \frac{W_2}{n} \left[
\frac{4H^2 W_2^2}{\sqrt{4H^2W_2^2 + 2H L^\star n}} + 
\sqrt{4H^2W_2^2 + 2H L^\star n} +
4 H W_2
\right]
\]
Denoting $HW_2^2/n$ by $x$, this simplifies to
\[
L^\star + \frac{2\sqrt{x} L^\star + 4x\sqrt{x}}{\sqrt{4x + 2L^\star}} + \sqrt{x} \sqrt{4x + 2L^\star} + 4x .
\]
Using the arithmetic mean-geometric mean inequality to upper bound the middle two terms gives
\[
L^\star + 2\sqrt{ 2 x L^\star + 4x^2} + 4x .
\]
Finally, using $\sqrt{a+b} \le \sqrt{a} + \sqrt{b}$, we get our final upper bound
\[
L^\star + 2\sqrt{ 2 x L^\star } + 8x .
\]

\section{Calculation of smoothness constant}
\label{app:smoothness}

\begin{align*}
&\quad \| (X^{(i)})^\top \nabla_s^2 \phi(X^{(i)}w, y^{(i)}) X^{(i)} \|_{2 \to 2}= \underset{v \neq 0}{\sup}\frac{\quad \| (X^{(i)})^\top \nabla_s^2 \phi(X^{(i)}w, y^{(i)}) X^{(i)}v \|_{2}}{\|v\|_2} \\
& \le\underset{v \neq 0}{\sup} \frac{\|(X^{(i)})^\top\|_{1 \to 2}\|\nabla_s^2 \phi(X^{(i)}w, y^{(i)})X^{(i)}v\|_1}{\|v\|_2} \le \underset{v \neq 0}{\sup}\frac{\| (X^{(i)})^\top  \|_{1 \to 2} \cdot
\|  \nabla_s^2 \phi(X^{(i)}w, y^{(i)}) \|_{\infty \to 1} \cdot
\|   X^{(i)}v \|_{\infty}}{\|v\|_2} \\
& \le \underset{v \neq 0}{\sup}\frac{\| (X^{(i)})^\top  \|_{1 \to 2} \cdot
\|  \nabla_s^2 \phi(X^{(i)}w, y^{(i)}) \|_{\infty \to 1} \cdot
\|   X^{(i)} \|_{2 \to \infty} \cdot \|v\|_2}{\|v\|_2} \\
& \le \left( \max_{j=1}^m \| X^{(i)}_j \| \right)^2 \cdot \|  \nabla_s^2 \phi(X^{(i)}w, y^{(i)}) \|_{\infty \to 1} \\
& \le R_X^2 \| \nabla_s^2 \phi(X^{(i)}w, y^{(i)}) \|_{\infty \to 1} .
\end{align*}

\section{Proof of Lemma~\ref{lem:vectorsmooth}}
\label{app:vectorsmooth}

\begin{proof}
Consider the function
\[
f(t) = \phi((1-t)s_1+ts_2) .
\]
It is clearly non-negative. Moreover
\begin{align*}
|f'(t_1) -f'(t_2)| &= | \inner{ \nabla_s \phi(s_1+t_1(s_2-s_1)) - \nabla_s \phi(s_1+t_2(s_2-s_1)), s_2 -s_1} | \\
&\le \lipl \nabla_s \phi(s_1+t_1(s_2-s_1)) - \nabla_s \phi(s_1+t_2(s_2-s_1)) \lipr_\star \cdot \lipl s_2 - s_1 \lipr \\
&\le H_\phi \, |t_1 - t_2|\, \lipl s_2 - s_1 \lipr^2
\end{align*}
and therefore it is smooth with constant $h = H_\phi  \lipl s_2 - s_1 \lipr^2$. Appealing to Lemma~\ref{lem:scalarsmooth}
now gives
\[
(f(1) - f(0))^2 \le 6 H_\phi  \lipl s_2 - s_1 \lipr^2 (f(1) + f(0)) (1-0)^2 
\]
which proves the lemma since $f(0) = \phi(s_1)$ and $f(1) = \phi(s_2)$.
\end{proof}

\section{Proof of Proposition~\ref{prop:N2cover}}
\begin{proof}
Let $w,w' \in \F_{\phi,2}(r)$. Using Lemma~\ref{lem:vectorsmooth}
\begin{align*}
&\quad \sum_{i=1}^n \frac{1}{n} \left( \phi(X^{(i)}w,y^{(i)}) - \phi(X^{(i)}w',y^{(i)}) \right)^2 \\
&\le 6 H_\phi \sum_{i=1}^n \frac{1}{n} \left( \phi(X^{(i)}w,y^{(i)}) + \phi(X^{(i)}w',y^{(i)}) \right) \\
&\quad \cdot \| X^{(i)}w - X^{(i)}w' \|_\infty^2 \\
&\le 6 H_\phi \cdot \max_{i=1}^n \| X^{(i)}w - X^{(i)}w' \|_\infty^2  \\
&\quad \cdot \sum_{i=1}^n \frac{1}{n} \left( \phi(X^{(i)}w,y^{(i)}) + \phi(X^{(i)}w',y^{(i)}) \right) \\
&= 6 H_\phi \cdot \max_{i=1}^n \| X^{(i)}w - X^{(i)}w' \|_\infty^2 \cdot \left( \hat{L}_\phi(w) + \hat{L}_\phi(w') \right) \\
&\le 12 H_\phi r \cdot \max_{i=1}^n \| X^{(i)}w - X^{(i)}w' \|_\infty^2 .
\end{align*} 
where the last inequality follows because $\hat{L}_\phi(w) + \hat{L}_\phi(w') \le 2r$.

This immediately implies that if we have a cover of the class $\G_2$ at scale $\epsilon/\sqrt{12 H_\phi r}$
w.r.t. the metric
\[
\max_{i=1}^n \max_{j=1}^m \left| \inner{X^{(i)}_j,w} - \inner{ X^{(i)}_j, w'} \right|
\]
then it is also a cover of $\F_{\phi,2}(r)$ w.r.t. $d_{2}^{Z^{(1:n)}}$. Therefore, we have
\begin{equation}
\label{eq:coveringofFphifromG2}
\N_2(\epsilon,\F_{\phi,2}(r),Z^{(1:n)}) \le \N_\infty(\epsilon/\sqrt{12 H_\phi r} ,\G_2,mn) .
\end{equation}
Appealing once again to a result by \citet[Corollary 3]{zhang2002covering}, we get
\begin{align*}
\log_2 \N_\infty(\epsilon/\sqrt{12 H_\phi r},\G_2,mn) &\le \left\lceil \frac{12H_\phi \, W_2^2 \, R_X^2 \,r}{\epsilon^2} \right\rceil \\
&\quad \times \log_2 (2mn+1)
\end{align*}
which finishes the proof.
%
\end{proof}

\section{Proof of Corollary~\ref{cor:subroot}}
\label{app:subroot}

\begin{proof}
We plug in Proposition~\ref{prop:N2cover}'s estimate into~\eqref{eq:dudley1}:
\begin{align*}
\radhat{n}{\F_{\phi,2}(r)}
&\le \inf_{\alpha > 0} \left( 4 \alpha + 10 \int_{\alpha}^{\sqrt{Br}} \sqrt{\frac{\left\lceil \frac{12H_\phi \, W_2^2 \, R_X^2 \,r}{\epsilon^2} \right\rceil \log_2 (2mn+1)}{n}} d\epsilon \right) \\
&\le \inf_{\alpha > 0} \left( 4 \alpha + 20\sqrt{3}W_2R_X\sqrt{\frac{r H_\phi \log_2(3mn)}{n}} \int_{\alpha}^{\sqrt{Br}} \frac{1}{\epsilon} d\epsilon \right)  \ .
\end{align*} 
Now choosing $\alpha = C\sqrt{r}$ where
$
C = 5\sqrt{3}W_2R_X\sqrt{\frac{H_\phi \log_2(3mn)}{n}}
$
gives us the upper bound
\[
\radhat{n}{\F_{\phi,2}(r)} \le 4\sqrt{r} C \left( 1 + \log \frac{\sqrt{B}}{C} \right) \le 4\sqrt{r} C \log \frac{3\sqrt{B}}{C} .
\]
\end{proof}

\section{Proof of Theorem~\ref{thm:smooth}}
\label{app:smooth}

\begin{proof}
We appeal to Theorem 6.1 of \citet{bousquet2002concentration} that assumes there exists an upper bound
\[
\radhat{n}{\F_{2,\phi}(r)} \le \psi_n(r)
\]
where $\psi_n: [0,\infty) \to \reals_+$ is a non-negative, non-decreasing, non-zero function such that $\psi_n(r)/\sqrt{r}$ is non-increasing.
The upper bound in  Corollary~\ref{cor:subroot}
above satisfies these conditions and therefore we set $\psi_n(r) = 4\sqrt{r} C \log \frac{3\sqrt{B}}{C}$ with $C$ as 
defined in Corollary~\ref{cor:subroot}. From Bousquet's result, we know that, with probability at least $1-\delta$,
\begin{align*}
\forall w \in \F_2,\ L_\phi(w)
&\le \hat{L}_\phi(w) + 
45 r_n^\star +
\sqrt{8 r_n^\star L_\phi(w)} \\
&\quad + \sqrt{4 r_0 L_\phi(w)} +
20 r_0
\end{align*}
where $r_0 = B(\log(1/\delta) + \log \log n)/n$ and $r_n^\star$ is the largest solution to the equation $r = \psi_n(r)$. In our case,
$r_n^\star = \left(   4 C \log \frac{3\sqrt{B}}{C}  \right)^2$. This proves the first inequality.

Now, using the above inequality with $w = \hat{w}$, the empirical risk minimizer and noting that $\hat{L}_\phi(\hat{w})
\le \hat{L}_\phi(w^\star)$, we get
\begin{align*}
L_\phi(\hat{w})
&\le \hat{L}_\phi(w^\star) + 
45 r_n^\star +
\sqrt{8 r_n^\star L_\phi(\hat{w})} \\
&\quad +\sqrt{4 r_0 L_\phi(\hat{w})} +
20 r_0
\end{align*}
The second inequality now follows after some elementary calculations detailed below.
\end{proof}

\subsection{Details of some calculations in the proof of Theorem~\ref{thm:smooth}}
Using Bernstein's inequality, we have, with probability at least $1-\delta$,
\begin{align*}
\hat{L}_\phi(w^\star) &\le L_\phi(w^\star) + \sqrt{\frac{4 \mathrm{Var}[\phi(Xw^\star,y)] \log(1/\delta)}{n}} + \frac{4B \log (1/\delta)}{n} \\
&\le L_\phi(w^\star) + \sqrt{\frac{4 B L_\phi(w^\star) \log(1/\delta)}{n}} + \frac{4B \log (1/\delta)}{n} \\
&\le L_\phi(w^\star) + \sqrt{4 r_0 L_\phi(w^\star)} + 4 r_0 .
\end{align*}

Set $D_0 = 45 r_n^\star + 20 r_0$.
Putting the two bounds together and using some simple upper bounds, we have, with probability at least $1- 2\delta$,
\begin{align*}
L_{\phi}(\hat{w}) &\le \sqrt{D_0 \hat{L}_\phi(w^\star) } + D_0 ,\\
\hat{L}_\phi(w^\star) &\le \sqrt{D_0 L_\phi(w^\star) } + D_0 .
\end{align*}
which implies that 
\[
L_{\phi}(\hat{w}) \le \sqrt{D_0} \sqrt{\sqrt{D_0 L_\phi(w^\star) } + D_0  } + D_0 .
\]
Using $\sqrt{ab} \le (a+b)/2$ to simplify the first term on the right gives us
\begin{align*}
L_{\phi}(\hat{w}) &\le \frac{D_0}{2} + \frac{\sqrt{D_0 L_\phi(w^\star)} + D_0}{2} + D_0 = \frac{\sqrt{D_0 L_\phi(w^\star)}}{2} + 2D_0 \ .
\end{align*}

\end{document}